\newtheorem{theorem}{Theorem}[section]
\newtheorem{lemma}[theorem]{Lemma}
\newtheorem{corollary}[theorem]{Corollary}
\newtheorem{definition}[theorem]{Definition}
\def\R{{\mathbb{R}}}
\def\Z{{\mathbb{Z}}}
\def\D{{\mathbb{D}}}
\def\B{{\mathbb{B}}}
\def\H{{\mathcal{H}}}
\def\modd{{\,\hbox{\rm{mod}}\,}}
\def\mymax{{\hbox{\rm{max}}}}
\def\myargmax{{\hbox{\rm{argmax}}}}
\def\zero{{\mathbf{0}}}
\def\one{{\mathbf{1}}}
\begin{document}
\title{Restore Translation Using Equivariant Neural Networks
}
%
%
\author{Yihan Wang, Lijia Yu, Xiao-Shan Gao\\
Academy of Mathematics and Systems Science, Chinese Academy of Sciences\\
University of  Chinese Academy of Sciences}
\maketitle

\begin{abstract}
\noindent
Invariance to spatial transformations such as translations and rotations
is a desirable property and a basic design principle for classification neural networks.
However, the commonly used convolutional neural networks (CNNs) are actually very sensitive to even small translations.
There exist vast works to achieve exact or approximate transformation invariance
by designing transformation-invariant models or assessing the transformations.
These works usually make changes to the standard CNNs
and harm the performance on standard datasets.
In this paper, rather than modifying the classifier,
we propose a pre-classifier restorer
to recover translated (or even rotated) inputs to the original ones
which will be fed into  any  classifier for the same dataset.
The restorer is based on a theoretical result which
gives a sufficient and necessary condition
for an affine operator to be translational equivariant on a tensor space.
%
\end{abstract}

\section{Introduction}
\label{sec:intro}

Deep convolutional neural networks (CNNs) had outperformed humans in many computer vision tasks~\cite{lecun1998gradient,he2016deep}.
%
One of the key  ideas in designing the CNNs is that the
convolution layer is equivariant with respect to translations,
which was emphasized both in the earlier work~\cite{fukushima1982neocognitron}
and the modern CNN~\cite{lecun1998gradient}.
However, the commonly used components, such as pooling~\cite{gholamalinezhad2020pooling}
and dropout~\cite{srivastava2013improving, srivastava2014dropout},
which help the network  to extract features and generalize,
actually make CNNs not equivariant to even small translations, as
pointed out in~\cite{azulay2018deep, engstrom2018rotation}.
As a comprehensive evaluation,
Figure~\ref{fig:acc-reduction} shows that
two classification CNNs suffer
the accuracy reductions
of more than $11\%$ and $59\%$ respectively on CIFAR-10 and MNIST,
when the inputs are horizontally and vertically translated at most 3 pixels.

\begin{figure}[htbp]
\centering
    \includegraphics[width=0.49\textwidth]{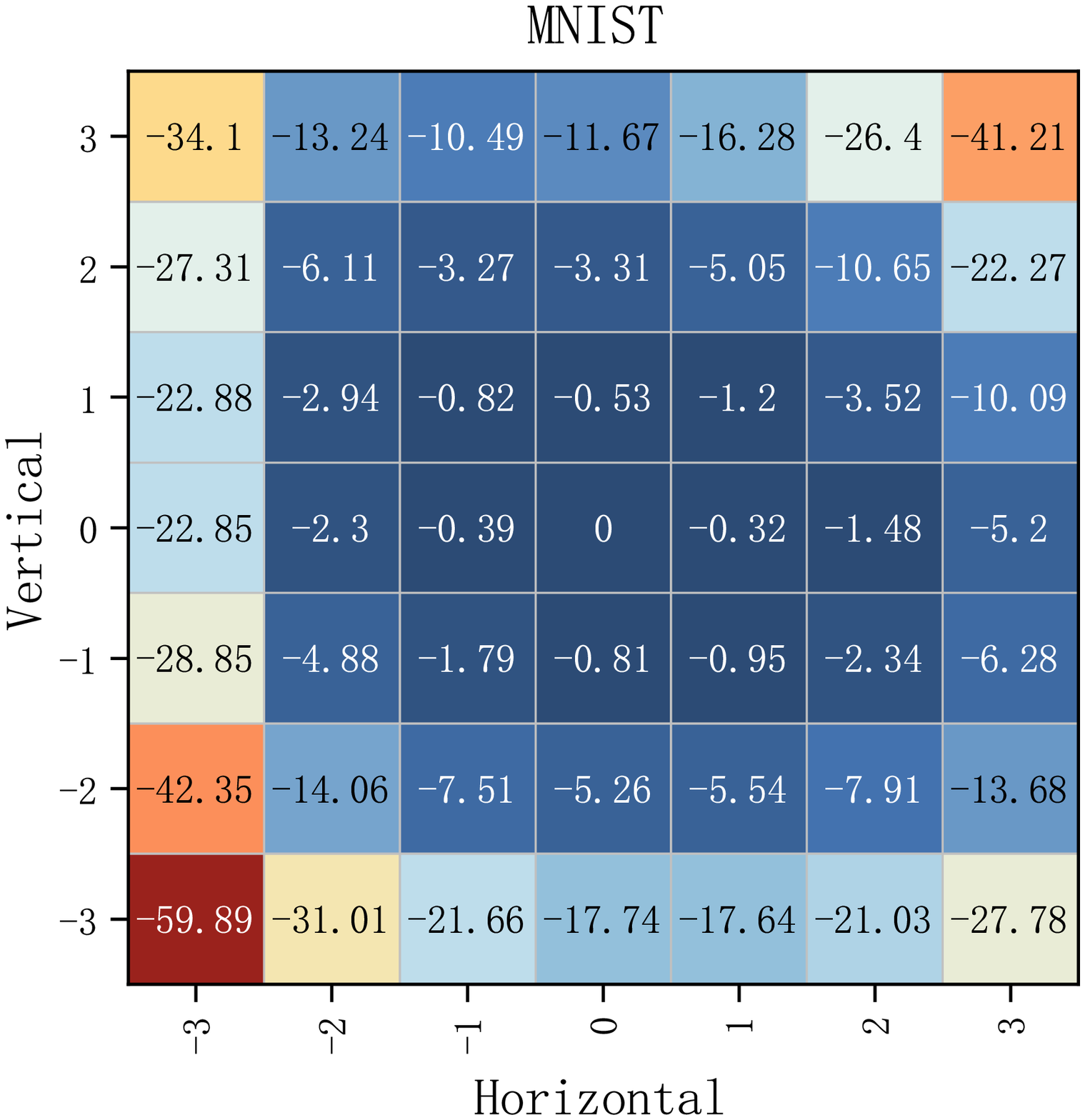}
    \includegraphics[width=0.49\textwidth]{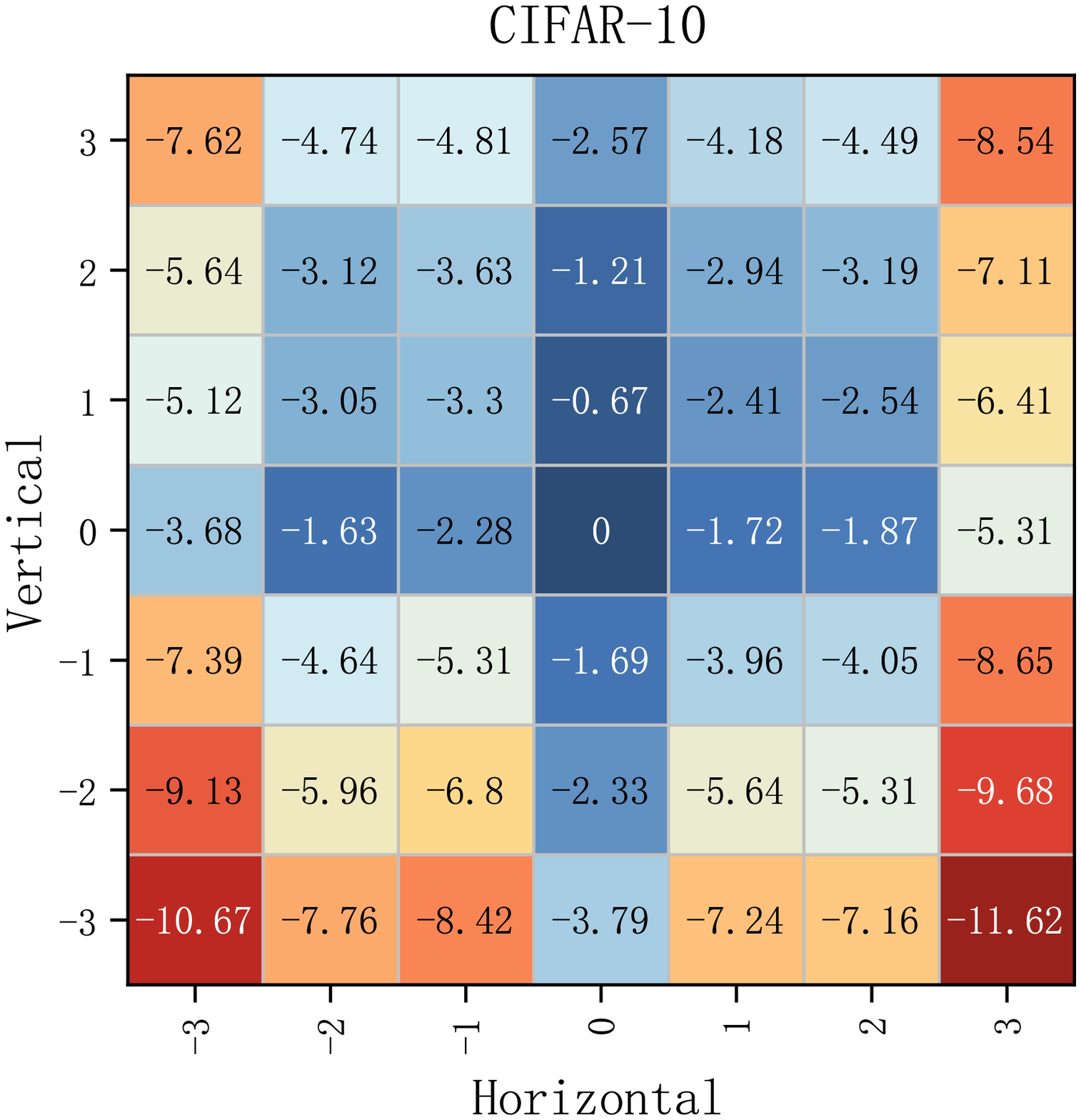}
\caption{
The accuracy reduction after vertical and horizontal translations.
The translation scope is [-3, 3] pixels.
Left: LeNet-5 on MNIST; Right: VGG-16 on CIFAR-10.
}
\label{fig:acc-reduction}
\end{figure}

Invariance to spatial transformations, including translations, rotations
and scaling, is a desirable property for classification neural networks
and the past few decades have witnessed thriving explorations on this topic.
In general, there exist three ways to achieve exact or approximate invariance.
The first is to design transformation-invariant
neural network structures~\cite{cohen2016group,gens2014deep,xu2014scale,
    marcos2018scale, ghosh2019scale, henriques2017warped,shen2016transform,
    sohn2012learning}.
%
The second is to assess and approximate transformations
via a learnable module~\cite{jaderberg2015spatial, esteves2017polar}
and then use the approximation to reduce the transformed inputs to ``standard'' ones.
The third is data augmentation~\cite{engstrom2018rotation, azulay2018deep, shorten2019survey}
by adding various transformations of the samples in the original dataset.

Those ad-hoc architectures to achieve invariance often bring extra parameters
but harm the network performance on standard datasets.
Moreover, the various designs with different purposes are not compatible with each other.
Data augmentation is not a scalable method since the invariance
that benefits from a certain augmentation protocol does not generalize to
other transformations~\cite{azulay2018deep}.
Including learnable modules such as the Spatial Transformer,
all the three approaches require training the classifier from scratch
and fail to endow existing trained networks with some invariance.
It was indicated in \cite{azulay2018deep} that ``the problem of insuring invariance to small image transformations in neural networks while preserving high accuracy remains unsolved.''

In this paper,
rather than designing any in-classifier component to make the classifier
invariant to some transformation,
we propose a pre-classifier restorer
to restore translated or rotated inputs to the original ones.
The invariance is achieved by feeding the restored inputs into any following classifier.
Our restorer depends only on the dataset instead the classifier.
Namely, the training processes of the restore and classifier are separate and
a restore is universal to any classifier trained on the same dataset.

We split the whole restoration into two stages,
transformation estimation and inverse transformation, see Figure~\ref{fig:arch}.
In the first stage, we expect that standard inputs lead to standard outputs
and the outputs of translated inputs reflect the translations.
Naturally, what we need is a strictly translation-equivariant neural network.
In Section~\ref{sec:equi}, we investigate at the theoretical aspect
the sufficient and necessary condition to construct
a strictly equivariant affine operator on a tensor space.
The condition results in \emph{the circular filters}, see Definition~\ref{def:circular-filter},
as the fundamental module
to a strictly translation-equivariant neural network.
We give the canonical architecture of translation-equivariant networks, see Equation~\eqref{equ:network}.
In Section~\ref{sec:restorer}, details of the restorer are presented.
We define a translation estimator, the core component of a restorer,
as a strictly translation-equivariant neural network
that guarantees the first component of every output on a dataset
to be the largest component, see Definition~\ref{def:estimator}.
For a translated input, due to the strict equivariance,
the largest component of the output reflect the translation.
Thus we can translate it inversely in the second stage and obtain the original image.
Though the restorer is independent on the following classifier,
it indeed depends on the dataset.
Given a dataset satisfying some reasonable conditions, i.e. \emph{an aperiodic finite dataset},
see Definition~\ref{def:aperiodic},
we prove the existence of a translation estimator, i.e. a restorer,
with the canonical architecture for this dataset.
Moreover, rotations can be viewed as translations
by converting the Cartesian coordinates to polar coordinates
and the rotation restorer arises in the similar way.

In Section~\ref{sec:exp}, the experiments on MNIST, 3D-MNIST and CIFAR-10
show that our restorers not only visually restore the translated   inputs
but also largely eliminate the accuracy reduction phenomenon.

\section{Related works} \label{sec:relate}

As generalization of convolutional neural networks,
group-equivariant convolutional neural networks~\cite{cohen2016group,gens2014deep}
exploited symmetries to endow networks with invariance to some group actions,
such as the combination of translations
and rotations by certain angles.
The warped convolutions~\cite{henriques2017warped} converted some other spatial transformations
into translations and thus obtain equivariance to these spatial transformations.
Scale-invariance~\cite{xu2014scale, marcos2018scale, ghosh2019scale} was injected into networks
by some ad-hoc components.
Random transformations~\cite{shen2016transform} of features maps were introduced in order to
prevent the dependencies of network outputs on specific poses of inputs.
Similarly, probabilistic max pooling~\cite{sohn2012learning} of the hidden units over the set of transformations
improved the invariance of networks in unsupervised learning.
Moreover, local covariant feature detecting methods~\cite{lenc2016learning, zhang2017learning} were
proposed to address the problem of extracting viewpoint invariant features from images.

Another approach to achieve invariance
is ``shiftable'' down-sampling~\cite{lenc2015understanding},
in which any original pixel can be linearly interpolated from the pixels
on the sampling grid.
These ``shiftable'' down-sampling exists if and only if the sampling frequency is
at least twice the highest frequency of the unsampled signal.

The Spatial Transformer~\cite{jaderberg2015spatial, esteves2017polar}, as a learnable module,
produces a predictive transformation for each input image
and then spatially transforms the input to a canonical pose to simplify the inference in the
subsequent layers.
Our restorers give input-specific transformations as well and adjust the input
to alleviate the poor invariance of the following classifiers.
Although the Spatial Transformers and our restorer are both learnable modules,
the training of the former depend not only on data but also on the subsequent layers,
while the latter are independent of the subsequent classifiers.

\section{Equivariant neural networks} \label{sec:equi}
Though objects in nature have continuous properties,
once captured and converted to digital signals,
there properties are represented by real tensors.
In this section,
we study the equivariance of operators on a tensor space.

\subsection{Equivariance in tensor space} \label{subsec:equi-tensor}
Assume that a map $\tilde{x}:\R^d\to \D$ stands for a property of some $d$-dimensional object
where $\D\subseteq \R$.
Sampling $\tilde{x}$ over a $(n_1, n_2, \cdots, n_d)$-grid
results in a tensor $x$ in a tensor space
\begin{equation}
    \H\coloneqq \D^{n_1}\otimes\D^{n_2}\otimes \cdots \otimes \D^{n_d}.
\end{equation}
We denote $[n]=[0,1,\ldots,n-1]$ for $n\in\Z_+$ and assume $k\modd n\in[n]$ for $k\in\Z$.
For an index $I=(i_1, i_2, \cdots, i_d)\in \prod_{i=1}^d [n_i]$ and $x\in\H$,
denote $x[I]$ to be the element of $x$ with subscript $(i_1, i_2, \cdots, i_d)$.
For convenience, we extend the index of $\H$ to  $I=(i_1, i_2, \cdots, i_d)\in \Z^d$ by defining
\[
x[I] = x[i_1\modd n_1, \cdots, i_d\modd n_d].
\]

\begin{definition}[Translation]
A translation $T^M:\H \to \H$ with $M\in \Z^d$
is an invertible linear operator
such that for all $I\in \Z^d$ and $x\in \H$,
\[
T^M(x)[I] = x[I-M].
\]
The inverse of $T^M$ is clearly  $T^{-M}$.
\end{definition}
\begin{definition}[Equivariance]
A map $w:\H \to \H$ is called {\em equivariant with respect to translations}
if
for all $x\in \H$ and $M\in\Z^d$,
\[
T^M(w(x))=w(T^M(x)).
\]
\end{definition}

\begin{definition}[Vectorization]
 A tensor $x$ can be vectorized to
 $X\in \overrightarrow{\H} =  \mathbb{D}^N$
 with $N= n_1 n_2 \cdots n_d$ such that
\begin{align*}
X(\delta(I)) \coloneqq   x[I] ,
\end{align*}
     where $\delta(I) \coloneqq (i_1 \modd n_1)n_2 n_3\cdots n_{d}+(i_2 \modd n_2)n_3 n_4\cdots n_{d}+\cdots +(i_d \modd n_d)$, and
 we denote $X = \overrightarrow{x}$.
Moreover, the translation $T^M$ is vectorized as
$T^M(X)\coloneqq \overrightarrow{T^M(x)}$.
\end{definition}

\subsection{Equivariant operators}\label{subsec:equi-operators}
When $\D = \R$, the tensor space $\H$ is a Hilbert space
by defining the inner product as $x\cdot z \coloneqq \overrightarrow{x}\cdot \overrightarrow{z}$
which is the inner product in vector space $\overrightarrow{\H}$.
In the rest of this section,
we assume $\D = \R$.

According to the Reize's representation theorem,
there is a bijection between continuous linear operator space and tensor space.
That is, a continuous linear operator $v: \H\to \R$ can be viewed as a tensor $v\in \H$
satisfying   $v(x)=v \cdot x$.
Now we can translate $v$ by $T^M$ and obtain $T^M(v): \H\to \R$ such that $T^M(v)(x)=T^M(v)\cdot x$.

We consider a continuous linear operator $w:\H \to \H$.
For $I\in\Z^d$ and $x\in\H$, denote $w_{I}(x) = w(x)[I]$.
Then $w_{I}:\H\to \R$ is a continuous linear operator.
%
An {\em affine operator} $\alpha:\H \to \H$ differs from a continuous linear operator $w$
by a {\em bias tensor} $c$ such that
$\alpha(x) = w(x) + c$ for all $x \in \H$.

\begin{restatable}{theorem}{thmequi}\label{thm:equi}
Let $\alpha(x)=w(x)+c:\H\to \H$ be an affine operator.
Then, $\alpha$ is  equivariant with respect to translations
if and only if
for all $M\in\Z^d$,
\begin{align*}
w_{M}  = T^M(w_{\zero}) \text{  and  }
c \propto \one,
\end{align*}
where  $\zero$ is the zero vector in $\Z^d$ and
$ c \propto \one$ means that $c$ is a {\em constant tensor}, that is, all of its entries are the same.
\end{restatable}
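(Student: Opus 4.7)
The plan is to split $\alpha$ into its linear part $w$ and bias $c$, then handle each piece separately. First I would observe that for any $x\in\H$ and $M\in\Z^d$,
\[
T^M(\alpha(x)) = T^M(w(x)) + T^M(c), \qquad \alpha(T^M(x)) = w(T^M(x)) + c,
\]
so equivariance of $\alpha$ says $T^M(w(x)) + T^M(c) = w(T^M(x)) + c$ for all $x,M$. Substituting $x=\zero$ and using linearity of $w$ gives $T^M(c) = c$ for every $M\in\Z^d$; since the indices are interpreted modulo $n_1,\dots,n_d$, translating by the standard basis vectors shows every entry of $c$ equals $c[\zero]$, i.e.\ $c \propto \one$. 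Conversely, if $c\propto\one$ then $T^M(c)=c$ trivially.

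With the bias handled, the remaining task is to show that a continuous linear $w:\H\to\H$ is equivariant iff $w_M = T^M(w_\zero)$ for every $M$. To reformulate equivariance at the level of the Riesz representers $w_I\in\H$, I would compute both sides of $T^M(w(x))[I] = w(T^M(x))[I]$ index-wise. The left side equals $w(x)[I-M] = w_{I-M}\cdot x$. The right side equals $w_I\cdot T^M(x)$, which by the change of summation index $K = J-M$ rewrites as
\[
w_I\cdot T^M(x) \;=\; \sum_{J} w_I[J]\, x[J-M] \;=\; \sum_{K} w_I[K+M]\, x[K] \;=\; T^{-M}(w_I)\cdot x.
\]
Since this must hold for all $x\in\H$ (a Hilbert space), equivariance is equivalent to the tensor identity $w_{I-M} = T^{-M}(w_I)$ for all $I,M\in\Z^d$.

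To close the equivalence, I would specialize $I = M$ in the above identity to extract the necessary condition $w_\zero = T^{-M}(w_M)$, equivalently $w_M = T^M(w_\zero)$. For the converse, suppose $w_M = T^M(w_\zero)$ for every $M$. Then for arbitrary $I,M$,
\[
w_{I-M} = T^{I-M}(w_\zero) = T^{-M}\bigl(T^{I}(w_\zero)\bigr) = T^{-M}(w_I),
\]
using that $M\mapsto T^M$ is a group homomorphism $(\Z^d,+) \to GL(\H)$. This recovers the general condition, hence equivariance of $w$, and combined with the bias analysis proves the theorem.

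The routine part is the index bookkeeping and the Riesz identification; the only subtle step is recognizing that a tensor invariant under all translations on a finite cyclic grid must be constant, which I would justify by applying unit translations along each axis.
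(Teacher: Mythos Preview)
Your proof is correct and close in spirit to the paper's, with one organizational difference worth noting. The paper carries $w$ and $c$ together: it writes the equivariance condition index-wise as $w_{I-M}(x)+c[I-M]=T^{-M}(w_I)(x)+c[I]$, then fixes $I=\zero$ and argues that a linear functional of $x$ equals a constant only if both vanish, obtaining $w_M=T^M(w_\zero)$ and $c[\zero]=c[M]$ simultaneously. You instead decouple earlier by evaluating at $x=\zero$ to force $T^M(c)=c$, and then treat $w$ on its own; your inline change-of-index computation $w_I\cdot T^M(x)=T^{-M}(w_I)\cdot x$ is exactly the content of the paper's preliminary lemma. Both routes are equally valid; yours avoids the ``linear equals constant'' step at the cost of invoking $w(\zero)=\zero$, which is immediate from linearity.
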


Proof of Theorem \ref{thm:equi}
is given in Appendix \ref{app:equi}.
Recall that $\overrightarrow{\H}=\R^N$ is the vectorization of $\H$ and
$T^M$ also translates vectors in $\overrightarrow{H}$.
Each continuous linear operator on $\H$ corresponds to
a matrix in $\R^{N\times N}$ and each bias operator corresponds to a vector in $\R^N$.
Now we consider the translation equivariance in vector space.
\begin{definition}[Circular filter] \label{def:circular-filter}
Let $W=(W_0,W_1,\cdots ,W_{N-1})^T$ be a matrix in $\R^{N\times N}$.
$W$ is called a \emph{circular filter} if
$W_{\delta(M)} =  T^M(W_0)$ for all $M\in\Z^d$.
\end{definition}
As the vector version of Theorem \ref{thm:equi}, we have
%
\begin{corollary}
\label{cor:equi}
Let $A: \R^N \to \R^N$ be an affine transformation such that
\[
A(X) = W \cdot X + C,
\]
in which $W\in \R^{N\times N}$, $C\in \R^N$.
Then, $A$ is equivariant with respect to translations in the sense that
for all $M\in \Z ^d$
\[
A(T^M(X)) = T^M(A(X))
\]
if and only if
$W$ is a circular filter
and $C\propto \one$.
\end{corollary}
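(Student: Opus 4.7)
The plan is to obtain Corollary \ref{cor:equi} as a direct translation of Theorem \ref{thm:equi} through the vectorization bijection $\overrightarrow{\cdot}:\H\to\R^N$, with essentially no new content beyond bookkeeping.

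First I would set up the dictionary between the matrix/vector and the tensor formulations. Given $W\in\R^{N\times N}$ with rows $W_0,\dots,W_{N-1}$ and $C\in\R^N$, define $w:\H\to\H$ and $c\in\H$ by $\overrightarrow{w(x)}=W\cdot\overrightarrow{x}$ and $\overrightarrow{c}=C$. Then $\alpha(x)\coloneqq w(x)+c$ is an affine operator, and conversely every affine $\alpha$ on $\H$ arises this way. In this dictionary, the continuous linear functional $w_I:\H\to\R$ with $w_I(x)=w(x)[I]$ is represented (via Riesz, as the paper notes) by a tensor $w_I\in\H$ whose vectorization is exactly the row $W_{\delta(I)}$; that is, $\overrightarrow{w_I}=W_{\delta(I)}$.

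Next I would check that translation intertwines with vectorization. By the definition of $T^M$ on $\overrightarrow{\H}$ in the paper, $T^M(\overrightarrow{x})=\overrightarrow{T^M(x)}$ for every $x\in\H$. In particular, applying this with $x=w_{\zero}$ gives
\[
T^M(W_0)=T^M(\overrightarrow{w_{\zero}})=\overrightarrow{T^M(w_{\zero})}.
\]
Equivariance of $A$ on $\R^N$ is the same statement as equivariance of $\alpha$ on $\H$, since $\overrightarrow{\alpha(T^M x)}=A(T^M\overrightarrow{x})$ and $\overrightarrow{T^M\alpha(x)}=T^M A(\overrightarrow{x})$, and $\overrightarrow{\cdot}$ is a bijection.

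Now I would invoke Theorem \ref{thm:equi}: $\alpha$ is equivariant iff $w_M=T^M(w_{\zero})$ for all $M\in\Z^d$ and $c\propto\one$. Vectorizing the first condition gives $W_{\delta(M)}=\overrightarrow{w_M}=\overrightarrow{T^M(w_{\zero})}=T^M(W_0)$, which is exactly the circular filter condition of Definition \ref{def:circular-filter}. Vectorizing the second gives $C=\overrightarrow{c}\propto\one$. Combining these two equivalences yields the corollary. There is no real obstacle here beyond confirming that the three operations in play, namely taking rows of $W$, Riesz representation of $w_I$, and the translation $T^M$, are all compatible under $\overrightarrow{\cdot}$, which follows directly from the definitions.
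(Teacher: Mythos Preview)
Your proposal is correct and matches the paper's approach: the paper states the corollary immediately after Theorem~\ref{thm:equi} as ``the vector version'' with no separate proof, so the intended argument is precisely the vectorization bookkeeping you spell out. There is nothing to add.
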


This affine transformation is very similar to the commonly used
convolutional layers~\cite{fukushima1982neocognitron, lecun1998gradient}
in terms of shared parameters and the alike convolutional operation.
But the strict equivariance calls for
the same in-size and out-size, and circular convolutions,  
which are usually violated by CNNs.

\subsection{Equivariant neural networks} \label{subsec:equi-network}
To compose a strictly translation-equivariant network,
the spatial sizes of the input and output in each layer must be the same
and thus down-samplings are not allowed.
Though  Corollary~\ref{cor:equi} provides the fundamental component
of a strictly translation-equivariant network,
different compositions of this component lead to various
equivariant networks.
Here we give the \emph{canonical architecture}.
We construct the strictly translation-equivariant network $F$ with $L$ layers as
\begin{align}
    F(X) = F_L\circ F_{L-1}\circ \cdots \circ F_1(X).\label{equ:network}
\end{align}
The $l$-the layer $F_l$ has $n_l$ channels and
for an input $X\in \R^{n_{l-1}\times N}$ we have
\begin{align}
    F_l(X) = \sigma(W[l]\cdot X + C[l])\in \R^{n_l \times N},\label{equ:layer}
\end{align}
where
\begin{align*}
    W[l] &= (W^1[l], \cdots, W^{n_l}[l])\in \R^{n_l\times n_{l-1}\times N\times N},\\
    C[l] &= (C^1[l]\cdot \one, \cdots,C^{n_l}[l]\cdot \one),\\
    W^k[l] &=(W^{k,1}[l],\cdots,W^{k,n_{l-1}}[l])\in \R^{n_{l-1}\times N\times N},\\
    C^k[l] &=(C^{k,1}[l],\cdots,C^{k, n_{l-1}}[l]) \in \R^{n_{l-1}},
\end{align*}
$\sigma$ is the activation,
$W^{k, r}[l]\in \R^{N\times N}$ are circular filters, $C^{k, r}[l]\in \R$ are constant biases
for $k=1,\cdots,n_l$ and $r=1,\cdots,n_{l-1}$,
the $\cdot$ denotes the inner product and $\one$ is the vector whose all components are 1.

\section{Translation restorer}\label{sec:restorer}

\subsection{Method}\label{subsec:method}

In Section~\ref{subsec:equi-network},
we propose a strictly equivariant neural network architecture (\ref{equ:network})
such that any translation on the input will be reflected on the output.
Generally speaking,
once the outputs of an equivariant network on a dataset
have some spatial structure,
this structure shifts consistently as the input shifts.
Thus, the translation parameter of a shifted input can be solved from its output.
Finally, we can restore the input via the inverse translation.
Figure~\ref{fig:arch} shows how a restorer works.
\begin{figure}[ht]
\centering
\includegraphics[scale=0.5]{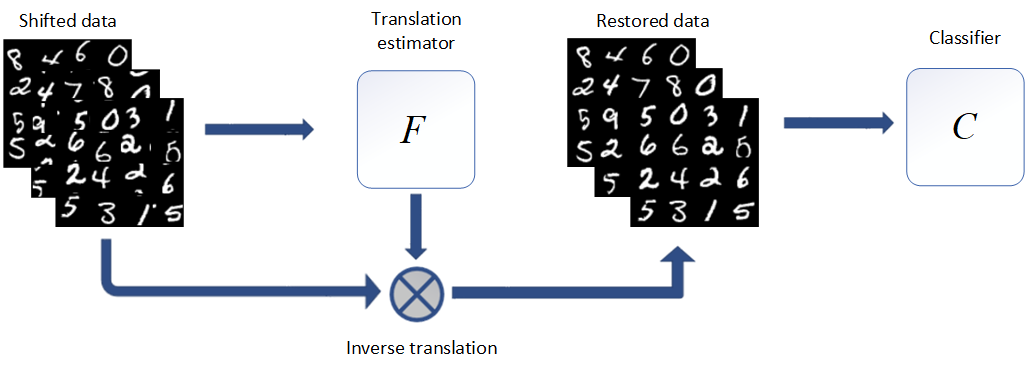}
\caption{
The pre-classifier translation restorer.
For a shifted data $T^M(x)$ as the input, the translation estimator
obtains the translation $M$
and restore the original data $T^{-M}(T^M(x))=x$, which is feed into a pre-trained classifier.
}
\label{fig:arch}
\end{figure}

The whole restoration process splits into two stages,
translation estimation and inverse translation.
We first define the translation estimator
which outputs a consistent and special structure on a dataset.

\begin{definition}\label{def:estimator}
Let $\mathcal{D}\subset \D^{P\times N}$ be a
dataset with $P$ channels.
Then a translation-equivariant network
\begin{align*}
F: \R^{P \times N} \to \R^{N}
\end{align*}
is said to be a translation estimator for $\mathcal{D}$ if
\begin{align*}
F(X)[0] = \mymax_{i=0} ^{N-1} F(X)[i],
\end{align*}
where $F(X)[i]$ is the $i$-th component of $F(X)$.
\end{definition}

Given such a translation estimator for dataset $\mathcal{D}$
and a shifted input $X' = T^M(X)$ for some $X\in \mathcal{D}$,
we propagate $X'$ through $F$ and get the output $F(X')\in \R^N$.
Since the first component of $F(X)$ is the largest,
the location of the largest component of $F(X')$
is exactly the translation parameter:
\begin{align*}
\delta(M) = \myargmax_{i=0}^{N-1} F_i(X').
\end{align*}
Then we can restore $X=T^{-M}(X')$ by inverse translation.
The restored inputs can be feed to any classifier trained on the dataset $\mathcal{D}$.

\subsection{Existence of the restorer}
In  this section, we show the existence of restorers,
that is, the translation estimator.
Note that our restorer is independent of the following classifier
but dependent on the dataset.
For translation, if a dataset contains both an image and a translated version of it,
the estimator must be confused.
We introduce aperiodic datasets to clarify such cases.
\begin{definition}[Aperiodic dataset]\label{def:aperiodic}
    Let $\mathcal{D}\subset \D^{P\times N}$ be a finite dataset with $P$ channels.
    We call $\mathcal{D}$ an aperiodic dataset if
    $\zero\notin \mathcal{D}$ and
    \begin{align*}
        T^{M}(X) = X' \iff M=\zero \hbox{ and } X=X',
    \end{align*}
    for $M\in \Z^{d+1}$ and $X, X'\in \mathcal{D}$.
    Here $d$ is the spatial dimension and $M$ decides the translation in the channel dimension in addition.
\end{definition}

Let $\mathcal{D}$ be an aperiodic dataset.
Given that $\D=[2^{Q+1}]$ which is the case in image classification,
we prove the existence of the translation estimator for such an aperiodic dataset.
The proof consists of two steps.
The data are first mapped to their binary decompositions through a
translation-equivariant network as Equation~\eqref{equ:network}
and then the existence of the translation-restorer
in the form of Equation~\eqref{equ:network} is proved for binary data.

Let $\D=[2^{Q+1}]$ and $\B=\{0, 1 \}$.
We denote $\eta:\D \to \B^{Q}$
to be the binary decomposition,
such as $\eta(2)=(0,1,0)$ and $\eta(3)=(1,0,1)$.
We perform the binary decomposition on  $X \in \D^{P\times N}$ element-wisely
and obtain $\eta(X)\in \B^{G\times N}$,
where $G=PQ$ is the number of channels in binary representation.
A dataset $\mathcal{D}\subseteq \D^{P \times N}$
can be decomposed into
$\mathcal{B}\subset \B^{G\times N}$.
Note that the dataset $\mathcal{D}$ is aperiodic if and only if its
binary decomposition $\mathcal{B}$ is aperiodic.

The following Lemma~\ref{lem:approx} demonstrates
the existence of a translation-equivariant network
which coincides with the binary decomposition $\eta$
on $[2^{Q +1}]^{P\times N}$.
Proof details are  placed in Appendix~\ref{app:approx}.
\begin{restatable}{lemma}{lemapprox}\label{lem:approx}
    Let $\eta: [2^{Q+1}]\to \B$ be the binary decomposition.
    There exists a $(2Q+2)$-layer network $F$
    in the form of Equation~\eqref{equ:network} with ReLU activations
    and width at most $(Q+1)N$
    such that for $X\in [2^{Q+1}]^{P\times N}$
    \begin{align*}
        F(X) = \eta(X).
    \end{align*}
\end{restatable}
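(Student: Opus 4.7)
The plan is to reduce Lemma~\ref{lem:approx} to the scalar task of binary-decomposing a single integer, then lift this scalar network to the tensor setting using the canonical form in Equation~\eqref{equ:network}. Since $\eta$ acts entrywise, it suffices to build a network each of whose layers is \emph{pointwise}, i.e.\ a linear combination of channels at each fixed spatial position followed by ReLU. Such pointwise operations sit in the canonical equivariant form by taking every filter $W^{k,r}[l]$ to be a scalar multiple $\alpha_{k,r}\,I_N$ of the identity: the rows $e_0,\ldots,e_{N-1}$ of $I_N$ satisfy $e_{\delta(M)}=T^M(e_0)$, so $I_N$ is a circular filter in the sense of Definition~\ref{def:circular-filter}. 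Together with constant biases $C^{k,r}[l]\,\one$, Corollary~\ref{cor:equi} then guarantees translation equivariance automatically. The task thus reduces to producing a scalar ReLU network of depth $2Q+2$ and per-position width at most $Q+1$ that maps $n\in[2^{Q+1}]$ to its binary decomposition.

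For the scalar network I would use the elementary identity, valid for any nonnegative integer $n$ and integer threshold $t$,
\[
\mathbf{1}[n\geq t]=\mathrm{ReLU}(n-t+1)-\mathrm{ReLU}(n-t),
\]
and extract bits top-down. Setting $r_{Q+1}=n$, at stage $i\in\{Q,Q-1,\ldots,0\}$ I compute $b_i=\mathbf{1}[r_{i+1}\geq 2^i]$ and $r_i=r_{i+1}-2^i b_i$, so that $r_i\in[0,2^i)$ and $n=\sum_{i=0}^{Q}2^i b_i$. Each stage is implemented by two ReLU layers: the first produces the auxiliary pair $\mathrm{ReLU}(r_{i+1}-2^i+1)$ and $\mathrm{ReLU}(r_{i+1}-2^i)$; the second forms the linear combinations yielding $b_i$ and $r_i$, after which ReLU acts as the identity because both quantities are nonnegative. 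Previously extracted bits $b_j$ with $j>i$ are propagated forward through the same layers by the same identity-pass trick on nonnegative neurons. In total, the $Q+1$ stages give depth exactly $2(Q+1)=2Q+2$.

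For width, each intermediate layer stores the current remainder together with the bits extracted so far, giving at most $Q+1$ channels per spatial position; replication across the $N$ positions via identity circular filters yields layer width at most $(Q+1)N$, and the $P$ input channels are accommodated by running $P$ independent copies in parallel and stacking their outputs into the $G=PQ$ bit channels of $\eta(X)$. The main technical obstacle I anticipate is the careful bookkeeping needed to keep every intermediate quantity inside the $(Q+1)$-per-position budget across all $2Q+2$ layers, while simultaneously verifying that each ReLU stage acts as the identity on whichever auxiliary quantities it is meant to preserve (so that bit values and remainders are not distorted in transit). Once this scalar bookkeeping is pinned down, the lift to the tensor network is immediate: identity matrices and their scalar multiples are circular filters, constant biases are allowed, and Corollary~\ref{cor:equi} delivers the required equivariance with no further work.
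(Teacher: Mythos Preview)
Your overall strategy matches the paper's exactly: first build a scalar ReLU network that computes the binary decomposition of a single integer, then lift it to the tensor setting by replacing each scalar weight by that scalar times the $N\times N$ identity (which is indeed a circular filter) and each scalar bias by a constant vector. The paper formalizes this lift just as you describe.

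Where your proposal and the paper diverge is in the scalar bit-extraction formula, and this difference is what controls the width. The paper extracts bit $q$ via the \emph{nested} ReLU
\[
x_q \;=\; \sigma\!\bigl(1-\sigma(2^q+\textstyle\sum_{j>q}2^j x_j - x)\bigr),
\]
carrying the original value $x$ together with the already-extracted bits (the remainder is then an on-the-fly linear combination, never stored). Each odd layer therefore adds a \emph{single} auxiliary neuron, and the per-position width never exceeds $Q+1$. Your difference-of-ReLUs formula $b_i=\sigma(r_{i+1}-2^i+1)-\sigma(r_{i+1}-2^i)$ requires \emph{two} auxiliaries in the odd layer, and since the even layer must also form $r_i=r_{i+1}-2^i b_i$, you must additionally carry $r_{i+1}$ through the odd layer; this pushes the per-position width to $Q-i+3$, hence above $Q+1$ for small $i$. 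The ``bookkeeping'' you flag is therefore not merely tedious: with the difference-of-ReLUs formula and remainder tracking the $(Q+1)$ budget cannot be met, and the fix is precisely to switch to a one-auxiliary identity such as the paper's $\sigma(1-\sigma(\cdot))$, or equivalently to carry $x$ rather than the running remainder. Once that substitution is made, your plan goes through verbatim.
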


The following Lemma~\ref{lem:binary-exist} demonstrate
the existence of a 2-layer translation restorer for an aperiodic binary dataset.
Proof details are placed in Appendix~\ref{app:binary-exist}.
\begin{restatable}{lemma}{lemexistence}\label{lem:binary-exist}
Let $\mathcal{B} = \{Z_s|s=1,2,\cdots, S \} \subset \B^{G\times N}$ be an aperiodic binary dataset.
Then there exists a 2-layer network $F$
in the form of Equation~\eqref{equ:network} with ReLU activations and width at most $SN$
such that for all $s=1,2,\cdots ,S$,
\begin{align*}
F(Z_s)[0] = \mymax_{i=0}^{N-1}F(Z_s)[i].
\end{align*}
\end{restatable}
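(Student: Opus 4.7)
The plan is to build $F$ as a sum of $S$ ``template-matching'' hidden channels, one per datum $Z_s$, each crafted so that on input $Z_s$ it spikes at spatial position $0$ and is identically zero on every other $Z_t$. Summing the channels in a trivial second layer then produces an output that equals $e_0\in\R^N$ on every point of $\mathcal{B}$, which is more than enough to guarantee $F(Z_s)[0]=\mymax_i F(Z_s)[i]$. This uses exactly $S$ channels of width $N$ in the hidden layer, matching the claimed bound $SN$.

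For the first layer I set the base row of the circular filter from input channel $r$ to output channel $s$ to be the signed template $W^{s,r}_{0}[1]:=2Z_s[r,\cdot]-\one\in\R^N$; this determines a legal circular filter via Definition~\ref{def:circular-filter}. Using the elementary identity $(2Z-\one)\cdot X=|Z|-d_H(Z,X)$ for binary vectors $Z,X$, the pre-activation at spatial position $\delta(M)$ of channel $s$ on input $X\in\B^{G\times N}$ works out to
\[
\sum_{r=1}^{G}\bigl(2T^{M}(Z_s[r,\cdot])-\one\bigr)\cdot X[r,\cdot]+b_s \;=\; |Z_s|-d_H\bigl(T^{M}(Z_s),X\bigr)+b_s,
\]
where $T^{M}$ is the spatial translation, $|Z_s|$ counts the $1$'s in $Z_s$, and $b_s$ is a constant bias to be fixed. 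This quantity is maximized exactly when $X=T^{M}(Z_s)$ and, being integer-valued, drops by at least one otherwise. Invoking Definition~\ref{def:aperiodic} with the channel component of the translation set to $\zero$ then forces $T^{M}(Z_s)=Z_t\iff(M,s)=(\zero,t)$, so across all triples $(s,M,t)$ the pre-activation attains its maximum $|Z_s|+b_s$ uniquely at $(M,t)=(\zero,s)$, and is at most $|Z_s|+b_s-1$ elsewhere. Choosing $b_s:=-(|Z_s|-1)$ makes this peak equal $1$ and drives every other pre-activation to $\le 0$, so ReLU produces the desired one-hot channel output.

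For the second layer I use, for each of the $S$ input channels, the identity circular filter (base row $e_0$) with zero bias, so that the layer simply sums the $S$ hidden-channel outputs; the final ReLU acts as identity on the non-negative sum. Thus $F(Z_s)=e_0$ for every $s$, and the lemma follows. The only subtlety I anticipate is pure bookkeeping: checking that the construction sits inside the canonical form of Equation~\eqref{equ:network}, namely that the biases $b_s\cdot\one$ are constant tensors and the matrices built from prescribed base rows are circular filters (both immediate from the construction), and noting that the spatial-only aperiodicity used in the key step is a direct specialization of Definition~\ref{def:aperiodic} obtained by taking the channel component of $M$ to be $\zero$.
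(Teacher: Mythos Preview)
Your argument is correct and is genuinely different from the paper's. The paper takes the first-layer base rows to be the normalized templates $Z_s/\|Z_s\|$, orders the data by $\|Z_s\|$, proves three inner-product inequalities (their Lemma~\ref{lem:main}), and then rescales channel $s$ by a geometric factor $\alpha^{s-1}$ with $\alpha\ge 1+GN+2G^2N^2$ so that the contribution from the ``diagonal'' channel dominates the accumulated errors from all earlier channels; the outcome is only the inequality $F(Z_t)[0]>F(Z_t)[i]$. Your choice of the signed template $2Z_s-\one$ instead of $Z_s$ turns the pre-activation into the affine-in-Hamming-distance expression $|Z_s|-d_H(T^M(Z_s),Z_t)+b_s$, which is \emph{integer-valued}, so a single integer bias $b_s=1-|Z_s|$ thresholds every off-peak entry to $\le 0$ and ReLU kills it exactly. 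This removes the need for norm ordering, for Lemma~\ref{lem:main}, and for the exponential rescaling; the weights stay bounded (in $\{\pm 1\}$ for the first layer), and you obtain the strictly stronger conclusion $F(Z_s)=e_0$, which also gives a \emph{unique} argmax---something the restorer in Section~\ref{subsec:method} actually needs but the lemma as stated does not guarantee. The bookkeeping you flag is indeed routine: within Equation~\eqref{equ:layer} one may realize the scalar bias $b_s$ by setting, say, $C^{s,1}[1]=b_s$ and $C^{s,r}[1]=0$ for $r>1$, and the identity matrix is the circular filter with base row $e_0$.
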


Given a $(2Q+2)$-layer network $F'$ obtained from Lemma~\ref{lem:approx}
and a 2-layer network $F''$ obtained from Lemma~\ref{lem:binary-exist},
we stack them and have $F=F''\circ F'$ which is exactly a translation restorer. We thus have proved the following theorem.

\begin{restatable}{theorem}{thmmain} \label{thm:main}
Let $\mathcal{D} = \{X_s|s=1,2,\cdots, S \} \subset [2^{Q+1}]^{P\times N}$ be an aperiodic dataset.
Then there exists a network $F:\R^{P \times N} \to \R^{N}$
in the form of Equation~\eqref{equ:network} with ReLU activations
such that for $s=1,2,\cdots ,S$,
\begin{align*}
F(X_s)[0] = \mymax_{i=0}^{N-1}F(X_s)[i],
\end{align*}
of which the depth is at most $2Q+4$ and the width is at most $\max (SN, (Q+1)N)$.
Namely, this network is a translation restorer for $\mathcal{D}$.
\end{restatable}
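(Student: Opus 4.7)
The plan is to follow exactly the decomposition strategy suggested immediately before the theorem statement: construct $F$ as a stack $F = F'' \circ F'$, where $F'$ maps each integer-valued input to its binary representation and $F''$ is a translation estimator on the resulting binary dataset. The two subnetworks are provided directly by Lemma \ref{lem:approx} and Lemma \ref{lem:binary-exist}, so the main task is to verify that gluing them yields a single canonical translation-equivariant network in the form of \eqref{equ:network} with the claimed depth, width, and restorer property.

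First I would form the element-wise binary decomposition $\mathcal{B} = \{\eta(X_s) \mid s=1,\ldots,S\} \subset \B^{G\times N}$ with $G = PQ$. Because $\eta$ is an element-wise bijection on $[2^{Q+1}]$, the $\Z^{d+1}$-translations on the $P$-channel tensor are precisely reflected by $\Z^{d+1}$-translations on the $G$-channel tensor (coupled across the $Q$ bit-channels), so $\mathcal{B}$ is aperiodic whenever $\mathcal{D}$ is, matching the remark made in the excerpt. Then I would invoke Lemma \ref{lem:approx} to produce a $(2Q+2)$-layer canonical network $F'$ of width at most $(Q+1)N$ satisfying $F'(X) = \eta(X)$ on all of $[2^{Q+1}]^{P\times N}$, and invoke Lemma \ref{lem:binary-exist} on $\mathcal{B}$ to produce a $2$-layer canonical network $F''$ of width at most $SN$ with $F''(Z_s)[0] = \max_{i} F''(Z_s)[i]$ for every $Z_s \in \mathcal{B}$.

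Next I would set $F := F'' \circ F'$. Because each layer in \eqref{equ:network} already consists of an affine map with circular-filter weights followed by an activation, no reshaping is needed between the two subnetworks: the composition is itself a canonical network of depth $(2Q+2) + 2 = 2Q+4$ and width $\max(SN, (Q+1)N)$. For the estimator property, any $X_s \in \mathcal{D}$ satisfies $F(X_s) = F''(\eta(X_s)) = F''(Z_s)$, and its first component is the maximum by construction of $F''$. Translation equivariance of $F$ is inherited from that of its two factors, since the composition of translation-equivariant maps is again translation-equivariant, and Corollary \ref{cor:equi} confirms each layer is equivariant.

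The only real obstacle is bookkeeping rather than a fresh argument: one must verify that $\eta$ intertwines the $\Z^{d+1}$-translation action used in Definition \ref{def:aperiodic} on the $P$-channel space with the corresponding action on the $G$-channel space, so that aperiodicity transfers from $\mathcal{D}$ to $\mathcal{B}$ and equivariance is preserved through the composition. Once this compatibility is checked, the theorem follows immediately from the two supporting lemmas.
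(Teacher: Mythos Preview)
Your proposal is correct and follows essentially the same approach as the paper: the paper's entire proof is the one-sentence observation that stacking the $(2Q+2)$-layer network $F'$ from Lemma~\ref{lem:approx} on top of the $2$-layer network $F''$ from Lemma~\ref{lem:binary-exist} yields the desired restorer $F = F'' \circ F'$. Your additional bookkeeping---checking that aperiodicity passes from $\mathcal{D}$ to $\mathcal{B}$, that the composition remains in canonical form, and that the depth and width bounds combine as claimed---is exactly the verification the paper leaves implicit.
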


\begin{figure}[htbp]
\centering
    \begin{minipage}{0.3\textwidth}
    \centering
    \includegraphics[width=1\textwidth]{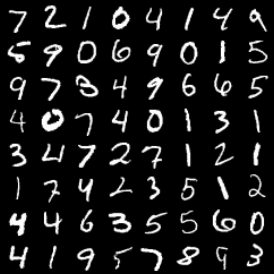}
    \end{minipage}
    \begin{minipage}{0.3\textwidth}
    \centering
    \includegraphics[width=1\textwidth]{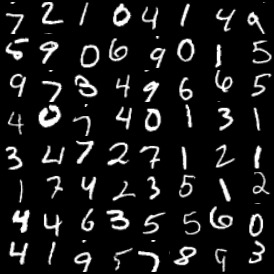}
    \end{minipage}
    \begin{minipage}{0.3\textwidth}
    \centering
    \includegraphics[width=1\textwidth]{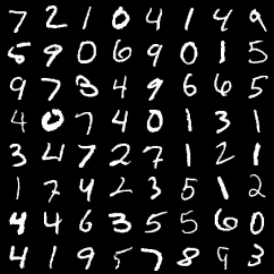}
    \end{minipage}
    \begin{minipage}{0.3\textwidth}
    \centering
    \includegraphics[width=1\textwidth]{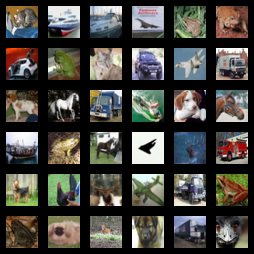}
    \centerline{Original}
    \end{minipage}
    \begin{minipage}{0.3\textwidth}
    \centering
    \includegraphics[width=1\textwidth]{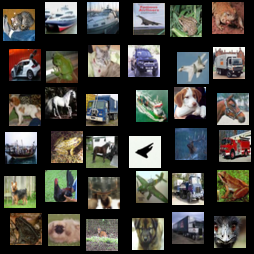}
    \centerline{Shifted}
    \end{minipage}
    \begin{minipage}{0.3\textwidth}
    \centering
    \includegraphics[width=1\textwidth]{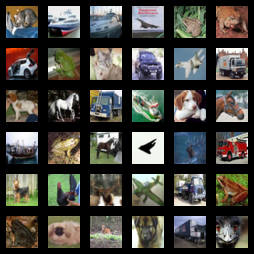}
    \centerline{Restored}
    \end{minipage}
\caption{The restorers for MNIST and CIFAR-10.}
\label{fig:compare}
\end{figure}

\section{Experiments} \label{sec:exp}
The core component of the restorer is the translation estimator which outputs the translation parameter of the shifted inputs.


We use the architecture described in Equation~\eqref{equ:network}
with $L=6$, $n_l=1$ for $l=1,\cdots,L$ and ReLU activations.
The training procedure aims at maximizing the first component of the outputs.
Thus the max component of the output indicates the input shift.
The experimental settings
are given in Appendix \ref{app:exp}.
We report four sets of experiments below.

\subsection{Translation restoration}
 We first focus on the performance of translation restoration.
 Experiments are conducted on MNIST, CIFAR-10, and 3D-MNIST.

\paragraph{\bf 2D Images.}
We train translation restorers for MNIST and CIFAR-10.
MNIST images are resized to 32x32 and CIFAR-10 images are padded 4 blank pixels at the edges.

In Figure~\ref{fig:compare},
the left column is the original images,
the middle column is the randomly shifted images
and the right column is the restored images.
On both datasets,
images are randomly shifted vertically and horizontally
at most $\frac{1}{4}$ of its size.
The shift is a circular shift where pixels shifted out of the figure appear on the other end.
We can see that the shifted images are disorganized
but the restored images are very alike the original images.

To evaluate the restoration performance of pretrained restorers, we train  classifiers and test them on randomly shifted images and restored ones and the results are given in Table \ref{tab-11}.
When images are not shifted, the restorers lead to only $0.3\%$ and $0.03\%$ accuracy reduction on two datasets.
Nevertheless, even if the translation scope is 1, restorers improve the accuracy.
Moreover, no matter how the images are shifted, the restorer can repair them to the same status and result in the same classification accuracy, namely $98.58\%$ and $88.18\%$,
while the accuracies drop significantly without the restorer, and the larger the range of translation, the more obvious the restoration effect

\begin{table}[htb]
\centering
\caption{Restoration performance on MNIST and CIFAR-10. Images are randomly shifted within the translation scope ranging from 0 to 8 in both vertical and horizontal directions. We use LeNet-5 on MNIST and ResNet-18 on CIFAR-10. "Res." and "Trans." stand for restorer and translation respectively.}
\label{tab-11}
\resizebox{1\textwidth}{!}{
\begin{tabular}{ccccccccccc}
\toprule
                          & Res.\textbackslash{}Trans. & 0     & 1     & 2     & 3     & 4     & 5     & 6     & 7     & 8     \\
\midrule
\multirow{3}{*}{MNIST}    & w/o                        & 98.89 & 98.21 & 95.41 & 87.07 & 76.61 & 62.9  & 51.33 & 41.1  & 35.7  \\
                          & w/                         & 98.59 & 98.59 & 98.59 & 98.59 & 98.59 & 98.59 & 98.59 & 98.59 & 98.59 \\
                          & Effect                     & -0.3  & +0.38  & +3.18  & +11.52 & +21.98 & +35.69 & +47.26 & +57.49 & +62.89 \\
\midrule
\multirow{3}{*}{CIFAR-10} & w/o                        & 88.21 & 86.58 & 85.9  & 83.65 & 82.16 & 80.46 & 79.37 & 77.71 & 76.01 \\
                          & w/                         & 88.18 & 88.18 & 88.18 & 88.18 & 88.18 & 88.18 & 88.18 & 88.18 & 88.18 \\
                          & Effect                     & -0.03 & +1.6   & +2.28  & +4.53  & +6.02  & +7.72  & +8.81  & +10.47 & +12.17 \\
\bottomrule
\end{tabular}
}
\end{table}

\paragraph{\bf Different Architectures}
Our proposed restorer is an independent module that can be placed before any classifier.
It is scalable to different architectures the subsequent classifier uses.

In Table~\ref{tab:architecture}, we evaluate the restoration performance on popular architectures including VGG-16, ResNet-18, DenseNet-121, and MobileNet v2.
Translated mages (w/Trans.) are randomly shifted within scope 4 in both vertical and horizontal directions.
The reduction of accuracy on original images is no more than $0.04\%$ and the restorer improves the accuracy on shifted images by $1.66\%\sim 6.02\%$.

\begin{table}[htb]
\centering
\caption{Restoration performance on different architectures and CIFAR-10.}
\label{tab:architecture}
\begin{tabular}{cccccccccccc}
\toprule
                          & \multicolumn{2}{c}{VGG-16} &  & \multicolumn{2}{c}{ResNet-18} &  & \multicolumn{2}{c}{DenseNet-121} &  & \multicolumn{2}{c}{MobileNet v2} \\
Res.\textbackslash Trans. & w/o        & w/         &  & w/o          & w/          &  & w/o           & w/           &  & w/o           & w/            \\
\midrule
w/o                       & 89.27      & 83.40      &  & 88.21        & 82.16       &  & 92.14         & 90.46        &  & 88.10         & 83.36         \\
w/                        & 89.23      & 89.23      &  & 88.18        & 88.18       &  & 92.12         & 92.12        &  & 88.09         & 88.09         \\
Effect                    & -0.04      & +5.83      &  & -0.03        & +6.02       &  & -0.02         & +1.66        &  & -0.01         & +4.73         \\
\bottomrule
\end{tabular}
\end{table}

\paragraph{\bf Translation Augmentation.}
Training with translation augmentation is another approach to improving the translational invariance of a model.
However, translation augmentation is limited in a certain scope and thus cannot ensure the effectiveness for test images shifted out of the scope.

In Figure~\ref{fig:aug-res}, we compare the restoration performance on models not trained with translation augmentation (dash lines) and models trained with translation augmentation (solid lines).
The augmentation scope is $10\%$ of the image size, that is, 3 pixels for MNIST and 4 pixels for CIFAR-10.
Translation augmentation indeed improves the translational invariance of the classifier on images shifted in the augmentation scope.
However, when the shift is beyond the augmentation scope, the accuracy begins to degrade.
In such a case, the pre-classifier restorer is still able to calibrate the shift and improve the accuracy of the classifier trained with augmentation.

\begin{figure}[h]
\centering
    \begin{minipage}{0.49\textwidth}
    \centering
    \includegraphics[width=0.95\textwidth]{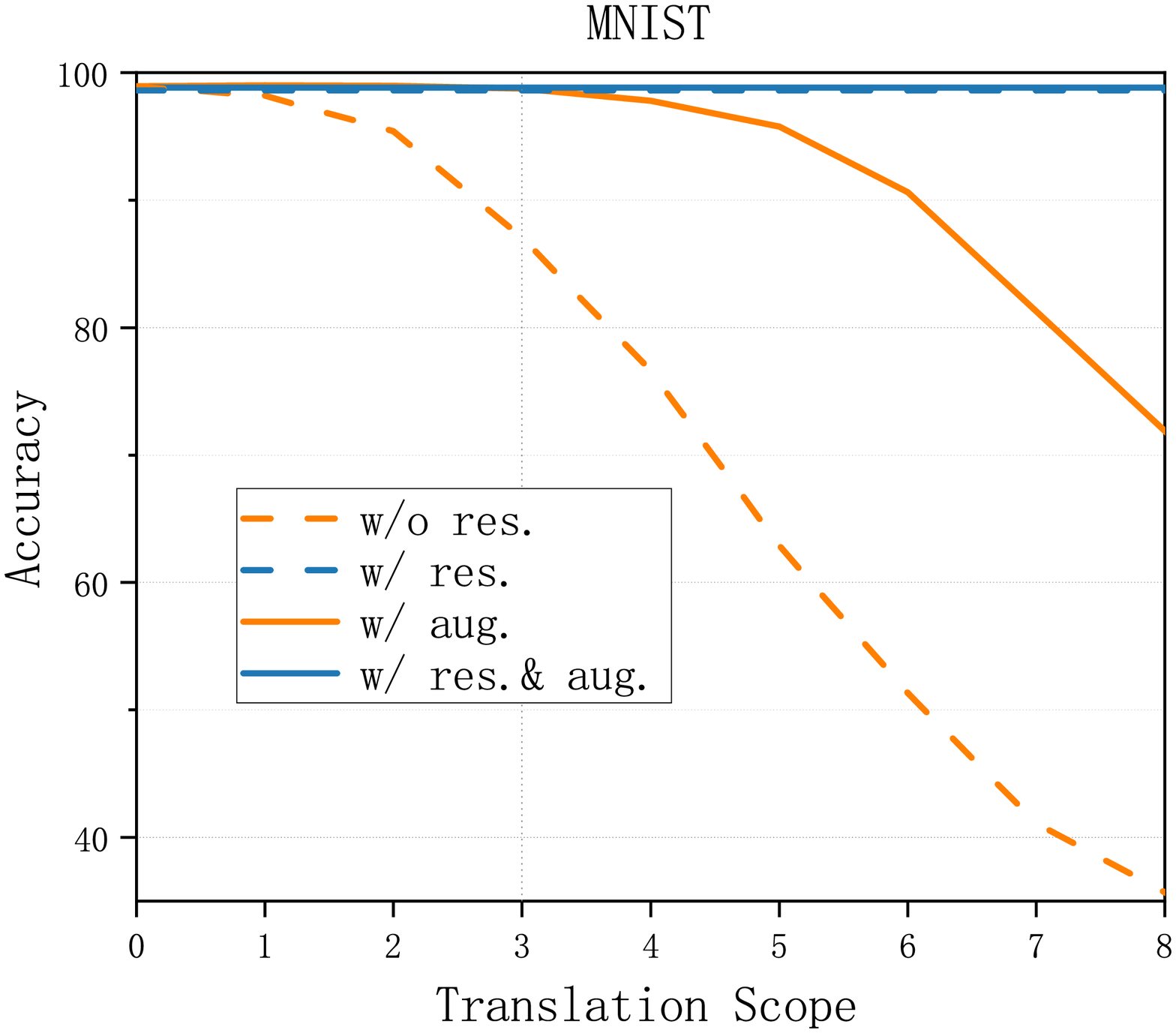}
    \end{minipage}
    \begin{minipage}{0.49\textwidth}
    \centering
    \includegraphics[width=0.95\textwidth]{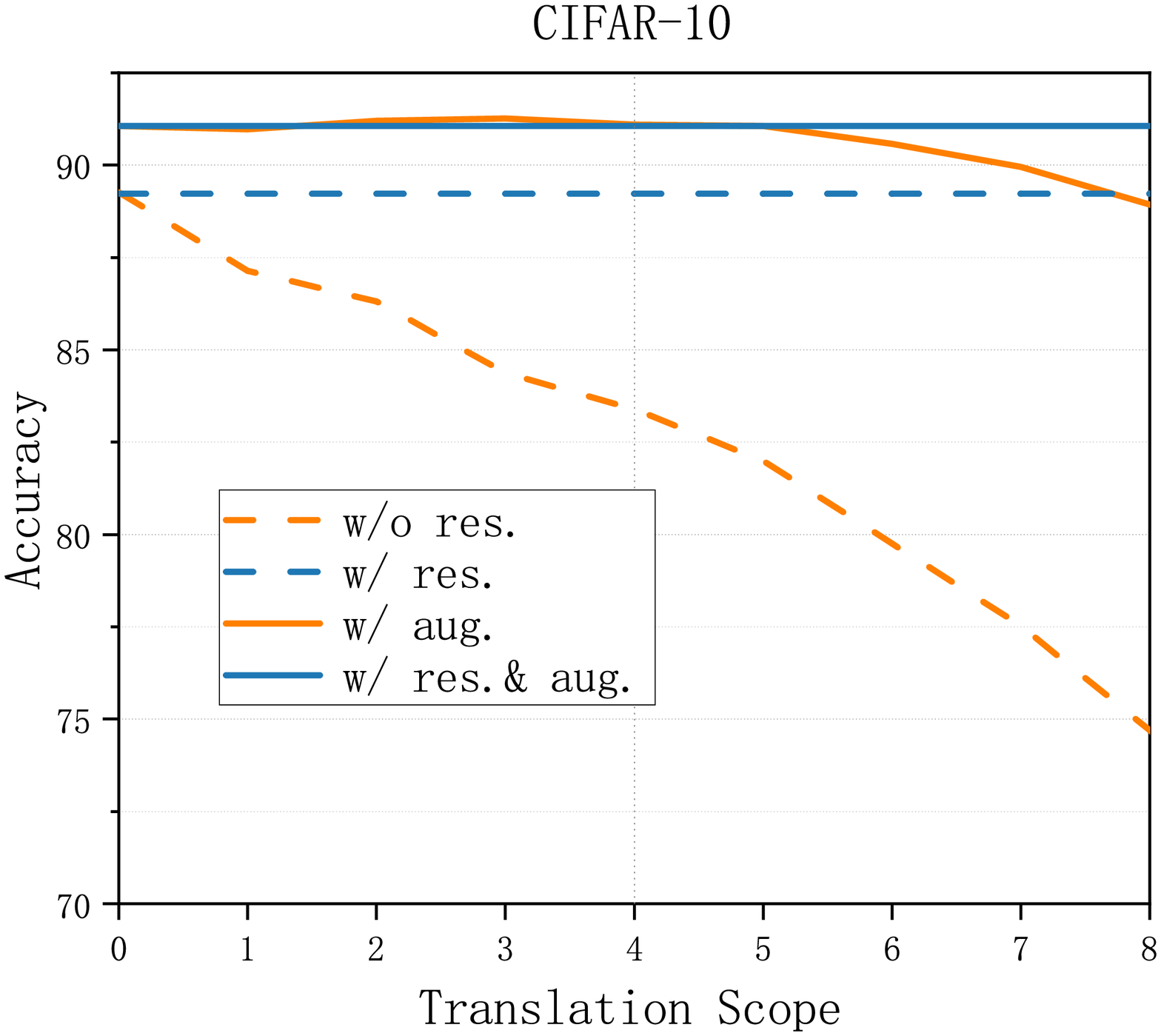}
    \end{minipage}
\caption{Restoration performance on classifiers trained with or without translation augmentations. The models are LeNet-5 for MNIST and VGG-16 for CIFAR-10.
"res." and "aug" stand for restorer and augmentation, respectively.}
\label{fig:aug-res}
\end{figure}

\paragraph{\bf 3D Voxelization Images.}
3D-MNIST contains 3D point clouds generated from images of MNIST.
The voxelization of the point clouds contains grayscale 3D tensors.

Figure~\ref{fig:compare3d} visualizes the restoration on 3D-MNIST.
In the middle of each subfigure,
the 3-dimensional digit is shifted in a fixed direction.
This fixed direction is detected by the translation estimator
and the restored digit is shown on the right.

\begin{figure}[h]
\centering
\includegraphics[width=5cm]{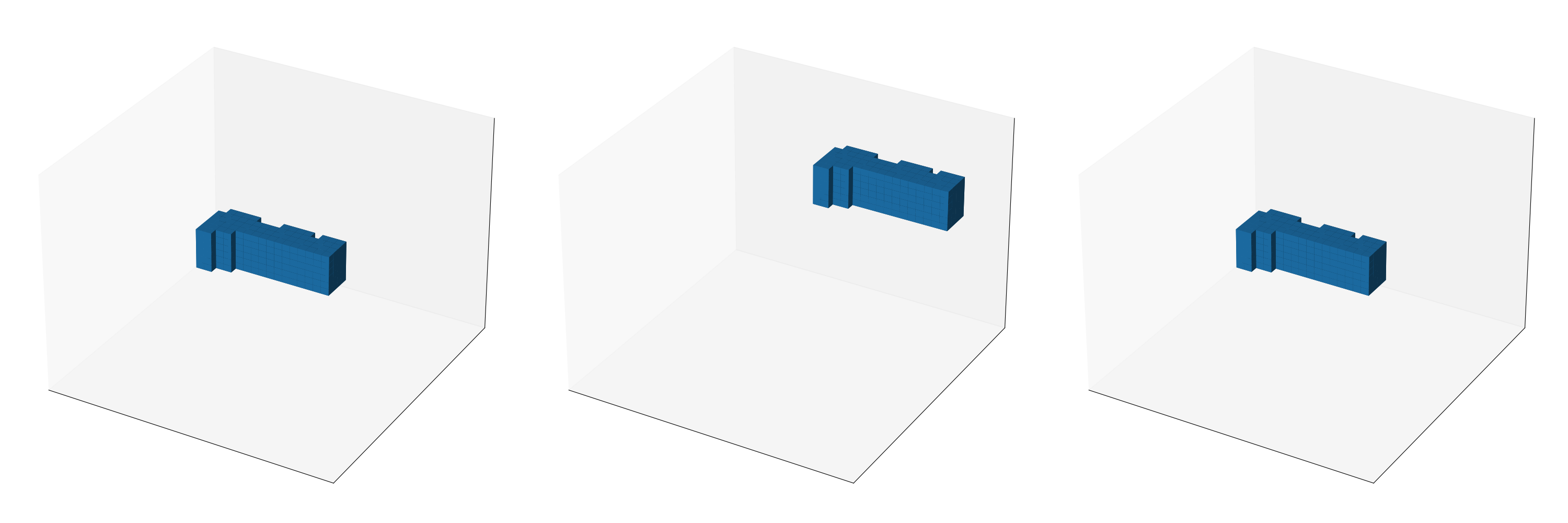}
\quad
\quad
\includegraphics[width=5cm]{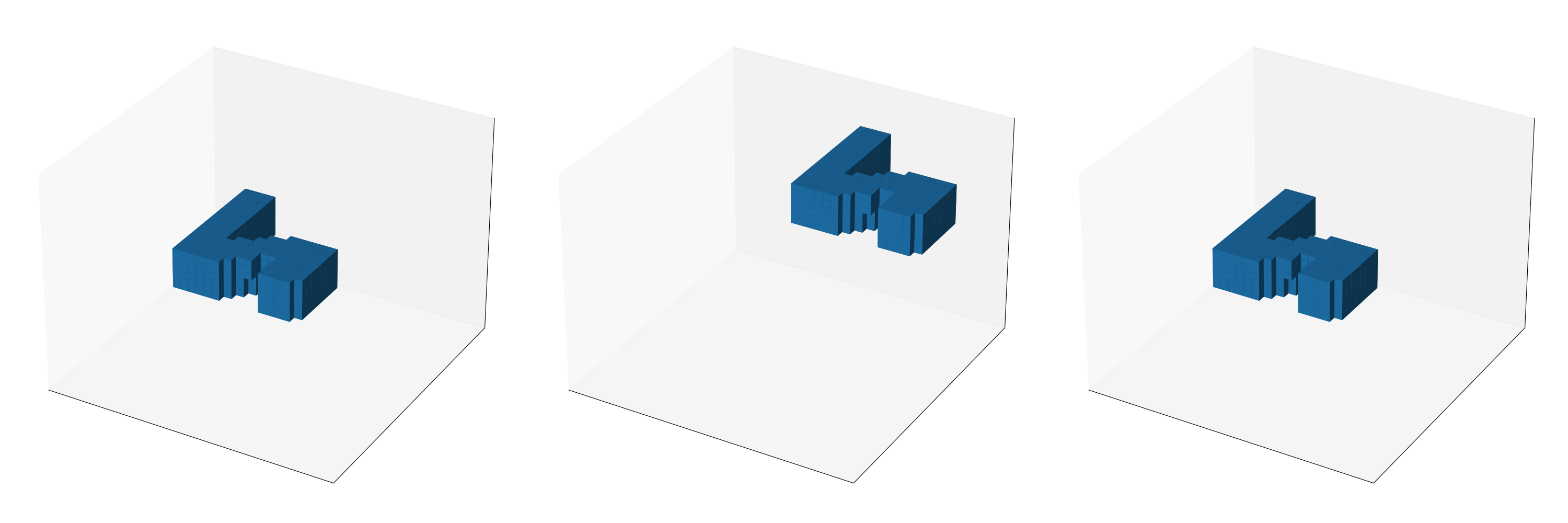}
\quad
\includegraphics[width=5cm]{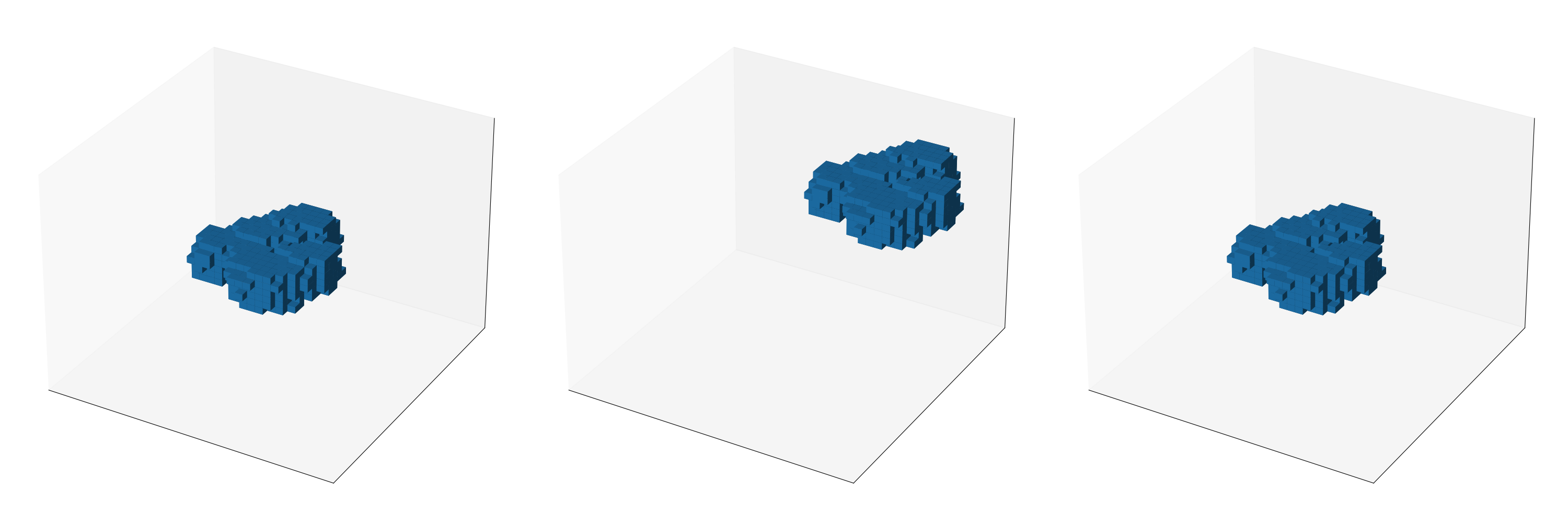}\\
\includegraphics[width=5cm]{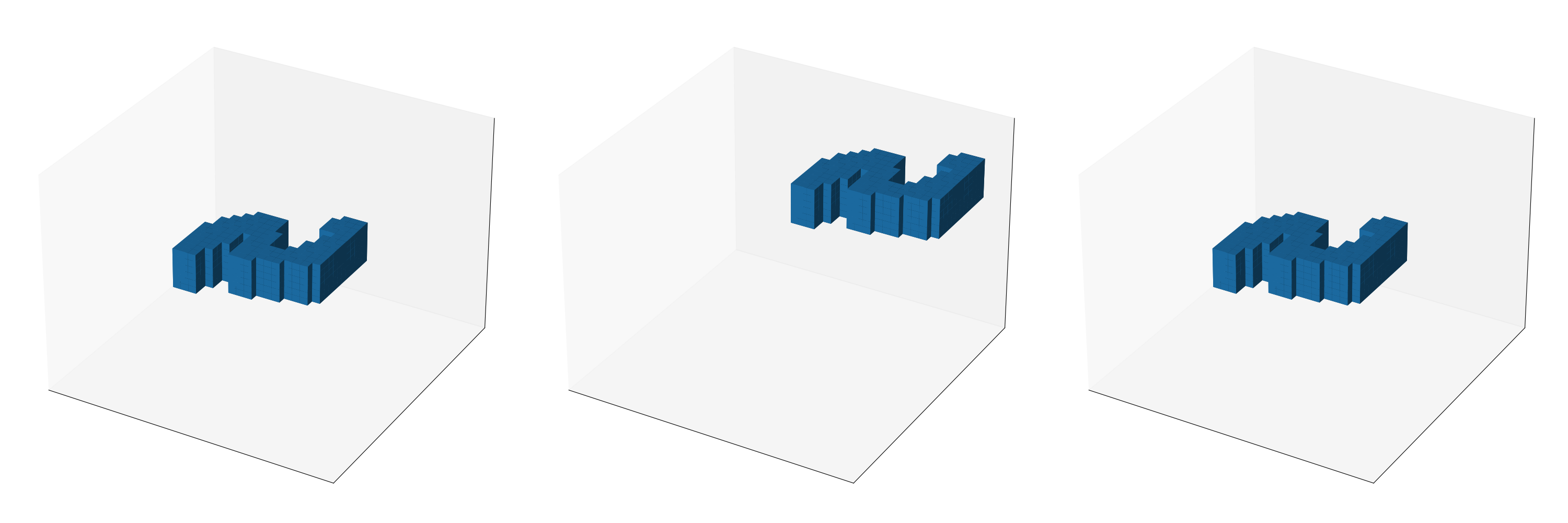}
\quad
\includegraphics[width=5cm]{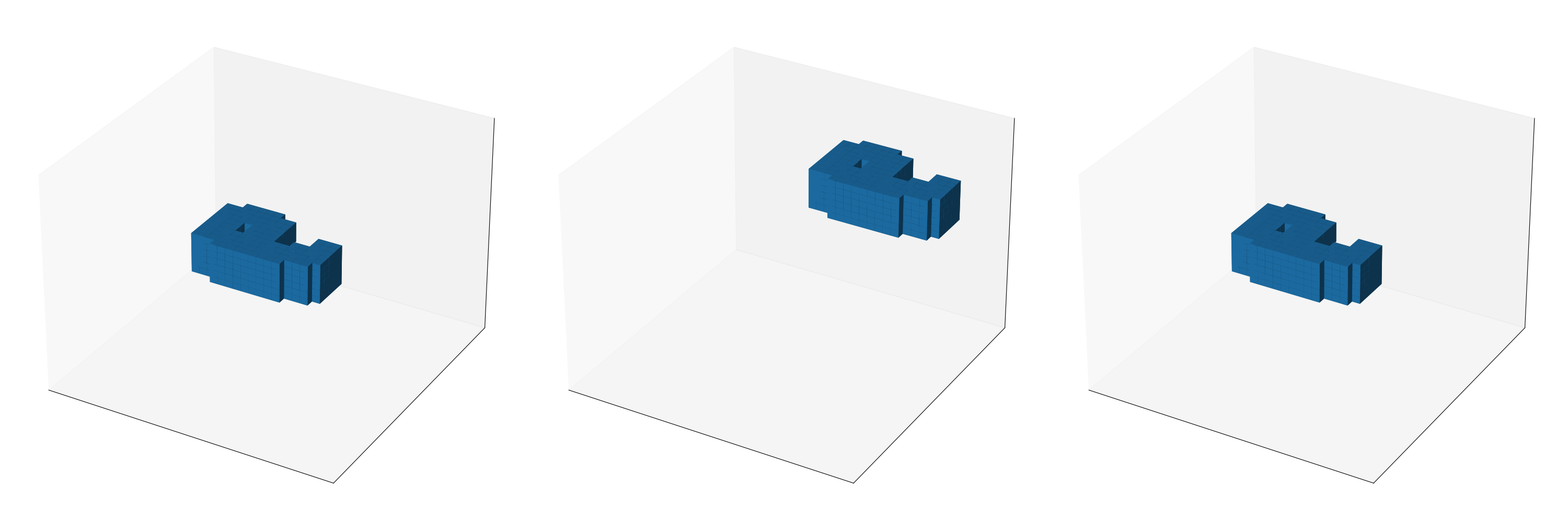}
\quad
\includegraphics[width=5cm]{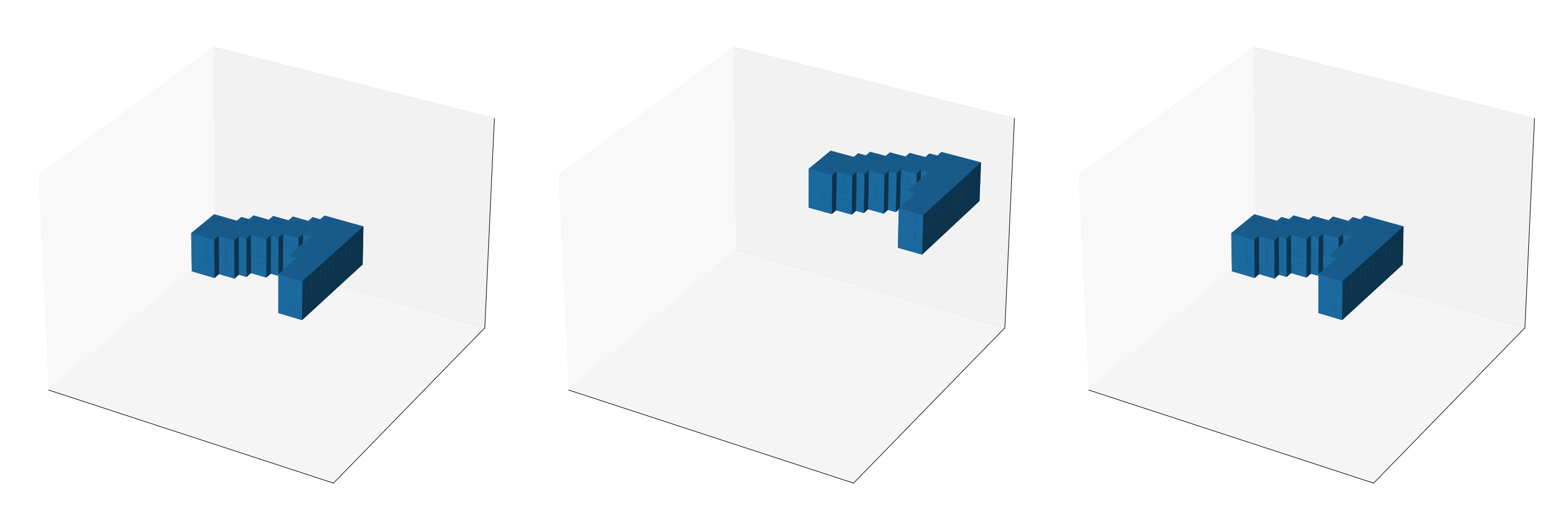}
\caption{
The restorer on 3D-MNIST.
In each sub-figure, the left is the original digit,
the middle is shifted digit, and the right is the restored digit.
}
\label{fig:compare3d}
\end{figure}

 \subsection{Rotation restoration}
 Rotation can be regarded as a kind of translation.
 The   Euclidean space $\R^{d+1}$
 can be characterized by polar coordinates
 \begin{align*}
     (\phi_1,\cdots,\phi_{d-1},\phi_{d}, r) \in [0,\pi] \times \cdots \times [0,\pi] \times [0,2\pi) \times \R^+.
 \end{align*}
 We can sample a property map, defined in Section~\ref{subsec:equi-tensor},
 $\tilde{x}:\R^{d+1}\to \R$ over a $(n_1, n_2, \cdots, n_{d+1})$-grid
 along the polar axes and obtain a tensor $x$ such that for given $R>0$ and $0<a<1$
 \begin{align*}
     x(I) = \tilde{x}(\frac{\pi i_1}{n_1-1}, \cdots, \frac{\pi i_{d-1}}{n_{d-1}-1}, \frac{2\pi i_{d}}{n_{d}},  Ra^{i_{d+1}}  ),
 \end{align*}
 where $I=(i_1,i_2,\cdots,i_{d+1})\in \prod_{i=1}^{d+1}[n_i]$.
 The last index $Ra^{i_{d+1}}$ can be replaced with $\frac{i_{d+1} R}{n_{d+1}}$.
 Note that the vectorization $X$ is in  $\D^{n_{d+1}\times N}$ with $N=n_1 n_2 \cdots n_d$.
 Since we only care about the rotation, i.e.\
 the translation $T^M$ on the first $d$ components of $I$,
 the space $\D^{n_{d+1}\times N}$ is viewed as an $n_{d+1}$-channel vector space.
 Thus, we can treat rotation as translation
 and leverage the method discussed above to restore rotated inputs.

 \paragraph{Visualization.}
 We experiment with the rotation restoration on MNIST.
 Since the transfer from Cartesian coordinates to polar coordinates requires high resolution,
 we first resize the images to $224\times 224$ pixels.
 Other settings are similar to the aforementioned experiments.

 Figure~\ref{fig:rot} visualizes the rotation restoration.
 We can tell from it that most rotated images are restored correctly,
 though some of them are not restored to the original images.
 {The rotated digit 9 in the top row is more like an erect digit 6 than the original one
 and the restorer just leaves it alone.}
 The reason why rotation restoration is not as perfect as translation restoration
 is that the dataset is not aperiodic with respect to rotations.
 On the one hand, some digits seem like the rotated version of other digits,
 such as 6 and 9.
 On the other hand, even in a certain digit class, images vary in digit poses.
 {For example, a rotated digit 0 is similar to an erect one.}
 \begin{figure}[htbp]
 \centering
 \includegraphics[scale=0.23]{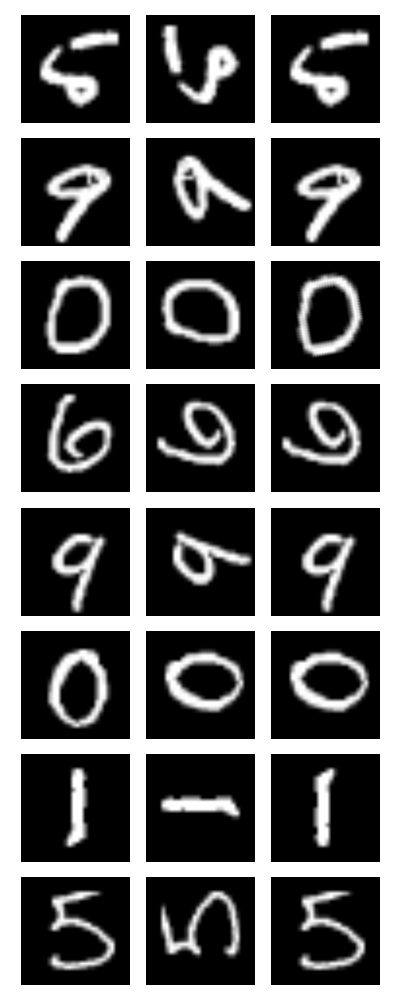}
 \includegraphics[scale=0.23]{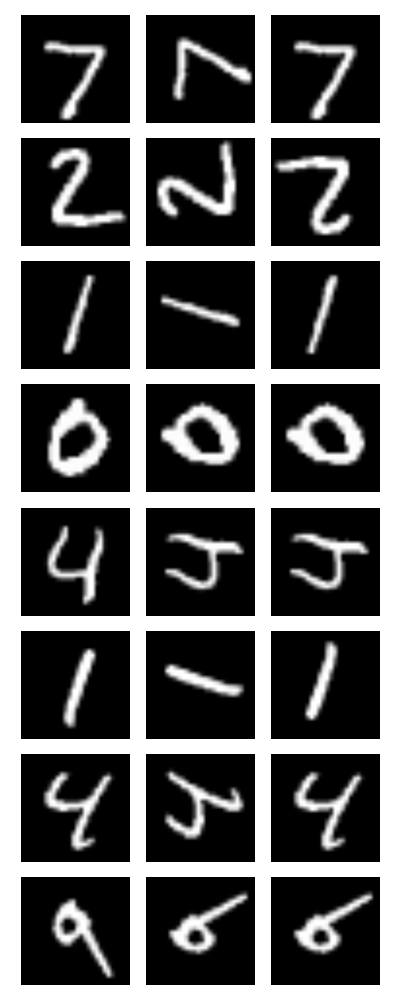}
 \includegraphics[scale=0.23]{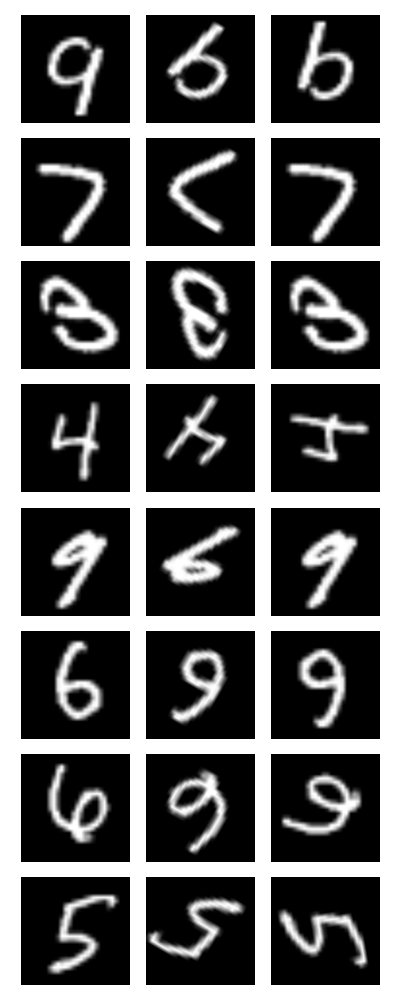}
 \caption{ The rotation restoration on the first 24 images in the test set of MNIST.
 The left column of each subfigure is the original images and the right is the restored images.
 In the middle column of each subfigure,
 the images are rotated $40^\circ, 90^\circ$ and $150^\circ$ respectively. }
 \label{fig:rot}
 \end{figure}
 
 Note that the group-equivariant CNNs in~\cite{cohen2016group,gens2014deep}
 can only deal with rotations by certain angles such as $90^\circ$, $180^\circ$,
 and $270^\circ$.
 On the other hand, our approach can deal with rotations from more angles.

\section{Conclusion}

This paper contributes to the equivalent neural networks in two aspects.
Theoretically, we give the sufficient and necessary conditions for an
affine operator $Wx+b$ to be  translational equivariant,
that is, $Wx+b$ is translational equivariant on a tensor space if and only if $W$ has the high dimensional convolution structure and $b$ is a constant tensor.
It is well known that if $W$ has the convolution structure, then  $Wx$ is equivariant to translations~\cite{fukushima1982neocognitron,he2016deep},
and this is one of the basic principles behind the design of CNNs.
Our work gives new insights into the convolutional structure used in CNNs
in that, the convolution structure is also the necessary condition and hence the most general structure for translational equivariance.
Practically, we propose the translational restorer to recover the original images from the translated or rotated ones.
The restorer can be combined with any classifier to alleviate
the performance reduction problem for translated or rotated images.
As a limitation, training a restorer on a large dataset such as the ImageNet is still computationally difficult.


\begin{thebibliography}{10}

\bibitem{azulay2018deep}
{\sc A.~Azulay and Y.~Weiss}, {\em Why do deep convolutional networks
  generalize so poorly to small image transformations?}, Journal of Machine
  Learning Research, 20 (2019), pp.~1--25.

\bibitem{cohen2016group}
{\sc T.~Cohen and M.~Welling}, {\em Group equivariant convolutional networks},
  in International Conference on Machine Learning, PMLR, 2016, pp.~2990--2999.

\bibitem{engstrom2018rotation}
{\sc L.~Engstrom, B.~Tran, D.~Tsipras, L.~Schmidt, and A.~Madry}, {\em A
  rotation and a translation suffice: Fooling cnns with simple
  transformations}, arXiv preprint,  (2018).

\bibitem{esteves2017polar}
{\sc C.~Esteves, C.~Allen-Blanchette, X.~Zhou, and K.~Daniilidis}, {\em Polar
  transformer networks}, arXiv preprint arXiv:1709.01889,  (2017).

\bibitem{fukushima1982neocognitron}
{\sc K.~Fukushima}, {\em Neocognitron: A self-organizing neural network model
  for a mechanism of pattern recognition unaffected by shift in position},
  Biological Cybernetics, 36 (1980), pp.~193--202.

\bibitem{gens2014deep}
{\sc R.~Gens and P.~M. Domingos}, {\em Deep symmetry networks}, Advances in
  Neural Information Processing Systems, 27 (2014).

\bibitem{gholamalinezhad2020pooling}
{\sc H.~Gholamalinezhad and H.~Khosravi}, {\em Pooling methods in deep neural
  networks, a review}, arXiv preprint arXiv:2009.07485,  (2020).

\bibitem{ghosh2019scale}
{\sc R.~Ghosh and A.~K. Gupta}, {\em Scale steerable filters for locally
  scale-invariant convolutional neural networks}, arXiv preprint
  arXiv:1906.03861,  (2019).

\bibitem{he2016deep}
{\sc K.~He, X.~Zhang, S.~Ren, and J.~Sun}, {\em Deep residual learning for
  image recognition}, in Proceedings of the IEEE Conference on Computer Vision
  and Pattern Recognition, 2016, pp.~770--778.

\bibitem{henriques2017warped}
{\sc J.~F. Henriques and A.~Vedaldi}, {\em Warped convolutions: Efficient
  invariance to spatial transformations}, in International Conference on
  Machine Learning, PMLR, 2017, pp.~1461--1469.

\bibitem{jaderberg2015spatial}
{\sc M.~Jaderberg, K.~Simonyan, A.~Zisserman, et~al.}, {\em Spatial transformer
  networks}, Advances in Neural Information Processing Systems, 28 (2015).

\bibitem{lecun1998gradient}
{\sc Y.~LeCun, L.~Bottou, Y.~Bengio, and P.~Haffner}, {\em Gradient-based
  learning applied to document recognition}, Proceedings of the IEEE, 86
  (1998), pp.~2278--2324.

\bibitem{lenc2015understanding}
{\sc K.~Lenc and A.~Vedaldi}, {\em Understanding image representations by
  measuring their equivariance and equivalence}, in Proceedings of the IEEE
  Conference on Computer Vision and Pattern Recognition, 2015, pp.~991--999.

\bibitem{lenc2016learning}
\leavevmode\vrule height 2pt depth -1.6pt width 23pt, {\em Learning covariant
  feature detectors}, in European Conference on Computer Vision, Springer,
  2016, pp.~100--117.

\bibitem{marcos2018scale}
{\sc D.~Marcos, B.~Kellenberger, S.~Lobry, and D.~Tuia}, {\em Scale
  equivariance in cnns with vector fields}, arXiv preprint arXiv:1807.11783,
  (2018).

\bibitem{shen2016transform}
{\sc X.~Shen, X.~Tian, A.~He, S.~Sun, and D.~Tao}, {\em Transform-invariant
  convolutional neural networks for image classification and search}, in
  Proceedings of the 24th ACM International Conference on Multimedia, 2016,
  pp.~1345--1354.

\bibitem{shorten2019survey}
{\sc C.~Shorten and T.~M. Khoshgoftaar}, {\em A survey on image data
  augmentation for deep learning}, Journal of big data, 6 (2019), pp.~1--48.

\bibitem{sohn2012learning}
{\sc K.~Sohn and H.~Lee}, {\em Learning invariant representations with local
  transformations}, arXiv preprint arXiv:1206.6418,  (2012).

\bibitem{srivastava2013improving}
{\sc N.~Srivastava}, {\em Improving neural networks with dropout}, University
  of Toronto, 182 (2013), p.~7.

\bibitem{srivastava2014dropout}
{\sc N.~Srivastava, G.~Hinton, A.~Krizhevsky, I.~Sutskever, and
  R.~Salakhutdinov}, {\em Dropout: a simple way to prevent neural networks from
  overfitting}, Journal of Machine Learning Research, 15 (2014),
  pp.~1929--1958.

\bibitem{xu2014scale}
{\sc Y.~Xu, T.~Xiao, J.~Zhang, K.~Yang, and Z.~Zhang}, {\em Scale-invariant
  convolutional neural networks}, arXiv preprint arXiv:1411.6369,  (2014).

\bibitem{zhang2017learning}
{\sc X.~Zhang, F.~X. Yu, S.~Karaman, and S.-F. Chang}, {\em Learning
  discriminative and transformation covariant local feature detectors}, in
  Proceedings of the IEEE Conference on Computer Vision and Pattern
  Recognition, 2017, pp.~6818--6826.

\end{thebibliography}

\newpage

\appendix

\section{Proof of Theorem \ref{thm:equi}}
\label{app:equi}
We first prove a lemma.
\begin{lemma} \label{lem:trans}
Let $v:H\to \R$ be a continuous linear operator.
We have
\[
v(T^M(x)) = T^{-M}(v)(x),
\]
for all $x\in H$ and all $M\in\Z^d$.
\end{lemma}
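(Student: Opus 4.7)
The plan is to realise $v$ as a tensor via the Riesz representation used in Section~\ref{subsec:equi-operators}, expand $v(T^M(x))$ as an inner product on $\H$, and then re-index the sum to recognise $T^{-M}(v)\cdot x$.

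Concretely, I would start by writing $v(x)=v\cdot x$ where, with a slight abuse of notation, $v\in\H$ is the tensor guaranteed by Riesz. By the definition of the inner product on $\H$ via its vectorization, this is
\[
v\cdot x \;=\; \sum_{I\in \prod_{i=1}^d [n_i]} v[I]\,x[I].
\]
Applying this with $T^M(x)$ in place of $x$ and using the definition $T^M(x)[I]=x[I-M]$ yields
\[
v(T^M(x)) \;=\; \sum_{I\in \prod_{i=1}^d [n_i]} v[I]\,x[I-M].
\]

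The key step is the change of summation index $J=I-M$. Because the excerpt extends the indexing of $\H$ to all of $\Z^d$ by the convention $x[I]=x[i_1\!\!\modd n_1,\dots,i_d\!\!\modd n_d]$, the map $I\mapsto I-M$ is a bijection of $\prod_i[n_i]$ with itself (modulo each $n_i$), so summing over $I$ is the same as summing over $J$. The sum therefore becomes
\[
\sum_{J\in \prod_{i=1}^d [n_i]} v[J+M]\,x[J] \;=\; \sum_{J\in \prod_{i=1}^d [n_i]} T^{-M}(v)[J]\,x[J],
\]
where the last equality uses $T^{-M}(v)[J]=v[J-(-M)]=v[J+M]$. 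Recognising the right-hand side as $T^{-M}(v)\cdot x = T^{-M}(v)(x)$ finishes the argument.

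The ``hard'' step is really only the re-indexing, which is entirely routine once the mod-$n_i$ convention is invoked; there is no analytic subtlety because $\H$ is finite-dimensional and the operator is automatically continuous. I therefore expect the proof to be short: one line of Riesz identification, one line expanding the inner product, one line of substitution $J=I-M$ justified by periodicity, and one line re-packaging as $T^{-M}(v)(x)$.
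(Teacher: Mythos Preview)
Your proposal is correct and matches the paper's own proof essentially line for line: identify $v$ with a tensor, expand the inner product, use $T^M(x)[I]=x[I-M]$, re-index $I\mapsto I+M$ via the mod-$n_i$ periodicity, and recognise $T^{-M}(v)\cdot x$. There is nothing to add.
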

\begin{proof}
A continuous linear operator $v$ can be viewed as a tensor in $H$.
We have
\begin{align*}
v(T^M(x))&= v\cdot T^M(x)   \\
&=\sum_{I\in \prod_{i=1}^d [n_i]} v[I]\cdot T^M(x)[I]\\
&=\sum_{I\in \prod_{i=1}^d [n_i]} v[I]\cdot x[I-M]\\
&=\sum_{I\in \prod_{i=1}^d [n_i]} v[I+M]\cdot x[I]\\
&=\sum_{I\in \prod_{i=1}^d [n_i]} T^{-M}(v)[I]\cdot x[I]\\
&=T^{-M}(v)\cdot x\\
&=T^{-M}(v)(x).
\end{align*}
\end{proof}

\thmequi*
\begin{proof}
Here we denote the index by $I=(i_1, i_2, \cdots, i_d)$.
 On the one hand,
\begin{align*}
&T^M(\alpha(x))[I]\\
=&T^M(w(x))[I] + T^M(c)[I]\\
=&w(x)[I-M]+c[I-M]\\
=&w_{I-M}(x) + c[I-M].
\end{align*}
On the other hand,
\begin{align*}
&\alpha(T^M(x))[I]\\
=&w(T^M(x))[I]+ c[I]\\
=&w_{I}(T^M(x)) + c[I]\\
=&T^{-M}(w_{I})(x) + c[I],
\end{align*}
in which the last equation is from Lemma~\ref{lem:trans}.

\textbf{Sufficiency}
Assume for all $M\in \Z^d$,
\begin{align*}
w_{M} = T^M(w_\zero) \text{  and  }
c \propto \one.
\end{align*}
We have
\begin{align*}
T^{-M}(w_{I}) =&T^{-M}(T^I(w_{\zero})) \\
=&T^{I-M}(w_\zero)\\
=&w_{I-M},\\
c[I-M] =&c[\zero]\\
=& c[I],\\
 T^M(\alpha(x))[I] =& \alpha(T^M(x))[I].
\end{align*}
Thus,
\begin{align*}
T^M(\alpha(x)) =& \alpha(T^M(x)).
\end{align*}

\textbf{Necessity}
Assume $\alpha$ is equivariant with respect to translations in the sense that
   \begin{align*}
 T^M(\alpha(x)) =& \alpha(T^M(x)).
\end{align*}
We have
\begin{align*}
   w_{I-M}(x) - T^{-M}(w_{I})(x) = c[I] - c[I-M].
\end{align*}
Fix indices $I=\zero$ and obtain that for all $M\in \Z^d$,
\begin{align*}
   w_{M}(x) - T^{M}(w_{\bm{0}})(x) = c(\zero) - c(M).
\end{align*}
Recall that a continuous linear operator can be viewed as a tensor
where the operation is the inner product in tensor space.
We have
   \begin{align*}
   c(\zero) - c(M) &= (w_{M} - T^{M}(w_{\zero}) )\cdot x
   = \overrightarrow{w_{M} - T^{M}(w_{\zero})}\cdot \overrightarrow{x}.
\end{align*}
For each fixed $M$,
the left side is a constant scalar
    and the right side is a linear transformation on all vector $\overrightarrow{x}\in \overrightarrow{H}$.
Thus, the both sides are equal to zero tensor and we have
\begin{align*}
   c(\zero) &= c(M), \\
   \overrightarrow{w_{M}} &= \overrightarrow{ T^{M}(w_{\zero})}.\\
\end{align*}
That is, for all $M\in \Z^d$,
 \begin{align*}
   c &\propto \one , \\
   w_{M} &= T^{M}(w_{\zero}).
\end{align*}
\end{proof}


\section{Proof of Lemma~\ref{lem:approx}}\label{app:approx}

We first prove a lemma.
\begin{lemma} \label{lem:decompose}
    Let $\eta: [2^{Q+1}]\to \B$ be the binary decomposition.
There exists a $(2Q+2)$-layer network $f:\R\to\R^Q$
    with ReLU activations and width at most $Q+1$
    such that for $x\in [2^{Q+1}]$
\begin{align*}
    f(x)=\eta(x).
\end{align*}
\end{lemma}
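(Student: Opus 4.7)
The approach is to peel off the binary digits of $x$ one at a time, starting from the most significant, using a simple ReLU-based threshold gadget. The key identity is that for a non-negative integer $x$ and a positive integer threshold $t$,
\[
\mathbf{1}[x \geq t] \;=\; \sigma(x - t + 1) - \sigma(x - t),
\]
which is immediate by cases: both summands vanish when $x \leq t - 1$, while for $x \geq t$ they equal $x - t + 1$ and $x - t$, differing by exactly $1$. Each thresholding thus reduces to two ReLU units followed by an affine combination, i.e., two layers of the canonical architecture.

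With this gadget in hand, I set $r_0 = x$ and, for $k = 0, 1, \ldots, Q$, define recursively
\[
b_{Q-k} = \mathbf{1}[r_k \geq 2^{Q-k}], \qquad r_{k+1} = r_k - 2^{Q-k} b_{Q-k},
\]
which yields the standard high-to-low binary decomposition of $x$. Step $k$ is implemented by a pair of layers: the first applies ReLU to $r_k - 2^{Q-k} + 1$, $r_k - 2^{Q-k}$, and $r_k$ itself (non-negative, so $\sigma(r_k) = r_k$); the second linearly combines them to emit the new bit $b_{Q-k}$ via the identity above and the updated remainder
\[
r_{k+1} \;=\; \sigma(r_k) - 2^{Q-k}\bigl[\sigma(r_k - 2^{Q-k} + 1) - \sigma(r_k - 2^{Q-k})\bigr].
\]
Previously extracted bits $b_Q, \ldots, b_{Q-k+1}$ are carried through using the trivial identity $\sigma(b) = b$ for $b \in \{0,1\}$, and the final layer collects all extracted bits as the output.

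Depth counting is then immediate: $Q+1$ bits with two layers each give a total of $2Q+2$ layers, matching the claim. For width, each layer holds at most a remainder channel, a bounded number of auxiliary ReLU channels for the current gadget, and the bits extracted so far.

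The routine parts of the proof will be the correctness of the per-bit gadget (case analysis on whether $r_k < 2^{Q-k}$ or $r_k \geq 2^{Q-k}$) and the depth count. The main technical obstacle is the tight width accounting: a naive bookkeeping (three auxiliary ReLU channels plus all previously extracted bits) suggests a width around $Q+3$, so reaching the stated bound of $Q+1$ requires merging the remainder-preservation channel into one of the threshold ReLU channels of the subsequent gadget, or equivalently exploiting $r_k = r_{k+1} + 2^{Q-k} b_{Q-k}$ so that $r_k$ need not be stored once $r_{k+1}$ and $b_{Q-k}$ are available further downstream.
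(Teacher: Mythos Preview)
Your high-level strategy matches the paper's exactly: extract bits from most significant to least, two layers per bit, carrying forward the already-extracted bits. The depth count $2Q+2$ is correct and routine.

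The substantive difference is the threshold gadget, and it is precisely where your width accounting gets stuck. You use the two-unit difference
\[
\mathbf{1}[r \ge t] \;=\; \sigma(r-t+1)-\sigma(r-t),
\]
which forces the first layer of each stage to hold \emph{two} fresh ReLU channels in addition to a pass-through for $r_k$ (or, equivalently, for $x$) and all previously extracted bits. Even with your suggested economies this leaves one layer of width $Q+2$: dropping the pass-through kills information (the pair $(\sigma(r-t+1),\sigma(r-t))$ collapses $r\in\{0,\dots,t-1\}$ to $(0,0)$, so the remainder cannot be recovered), and the identity $r_k=r_{k+1}+2^{Q-k}b_{Q-k}$ is circular since $r_{k+1}$ is what you are trying to produce from $r_k$.

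The paper sidesteps this by using a \emph{nested} one-channel gadget,
\[
x_q \;=\; \sigma\!\bigl(1-\sigma\bigl(2^{q}+\textstyle\sum_{j>q}2^{j}x_j - x\bigr)\bigr),
\]
and by keeping the state as $(x,x_Q,\dots,x_{Q-q+1})$ rather than a running remainder. Each of the two layers in a stage then introduces exactly one new channel (the inner $\sigma$, then the outer $\sigma$), so the width after stage $q$ is $q+2$; at the very last stage $x$ is overwritten in place by the final intermediate value, giving width exactly $Q+1$. Replacing your difference gadget with this nested form closes the gap; nothing else in your outline needs to change.
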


\begin{proof}
We decompose $x\in [2^{Q+1}]$ as $x=x_0+2x_1+\cdots +2^{Q} x_Q$.
Then  for $q=0,\cdots,Q$, we have
    \begin{align*}
        x_q = \sigma(1-\sigma(2^q+2^{q+1}x_{q+1}+\cdots 2^Qx_Q - x)).
    \end{align*}
Thus, for $q=0,\cdots,Q-1$, we construct
\begin{align*}
    f_{2q+1}(
    \begin{pmatrix}
        x\\
        x_Q \\
        \vdots \\
        x_{Q-q+1}
    \end{pmatrix}
    )
    =
    \sigma(
    \begin{pmatrix}
        0\\
        \vdots \\
        0 \\
        2^{Q-q}
    \end{pmatrix}
    +
    \begin{pmatrix}
        1   &\cdots &0  &0 \\
        \vdots &\ddots &\vdots &\vdots \\
        0   &\cdots &1  &0 \\
        -1 &2^Q &\cdots &2^{Q-q+1}
    \end{pmatrix}
    \begin{pmatrix}
        x\\
        x_Q \\
        \vdots \\
        x_{Q-q+1}
    \end{pmatrix}
    )\in \R^{q+2} ,
\end{align*}

\begin{align*}
    f_{2q+2}(
    \begin{pmatrix}
        x\\
        x_Q \\
        \vdots \\
        x_{Q-q}
    \end{pmatrix}
    )
    =
    \sigma(
    \begin{pmatrix}
        0\\
        \vdots \\
        0 \\
        1
    \end{pmatrix}
    +
    \begin{pmatrix}
        1   &\cdots &0  &0 \\
        \vdots &\ddots &\vdots &\vdots \\
        0   &\cdots &1  &0 \\
        0   &\cdots &0  &-1
    \end{pmatrix}
    \begin{pmatrix}
        x\\
        x_Q \\
        \vdots \\
        x_{Q-q}
    \end{pmatrix}
    )\in \R^{q+2}.
\end{align*}

The last 2 layers
\begin{align*}
    f_{2Q+1}(
    \begin{pmatrix}
        x\\
        x_Q \\
        \vdots \\
        x_{1}
    \end{pmatrix}
    )
    =
    \sigma(
    \begin{pmatrix}
        1\\
        0\\
        \vdots \\
        0 \\
    \end{pmatrix}
    +
    \begin{pmatrix}
        -1 &2^Q&\cdots &2\\
        0  &1  &\cdots &0 \\
        \vdots &\vdots &\ddots &\vdots \\
        0  &0  &\cdots &1
    \end{pmatrix}
    \begin{pmatrix}
        x\\
        x_Q \\
        \vdots \\
        x_1
    \end{pmatrix}
    )\R^{Q+1},
\end{align*}

\begin{align*}
    f_{2Q+2}(
    \begin{pmatrix}
        x_0\\
        x_Q \\
        \vdots \\
        x_1
    \end{pmatrix}
    )
    =
    \sigma(
    \begin{pmatrix}
        1\\
        0\\
        \vdots \\
        0
    \end{pmatrix}
    +
    \begin{pmatrix}
        -1  &0  &\cdots &0\\
        0   &1  &\cdots &0 \\
        \vdots  &\vdots &\ddots &\vdots \\
        0   &0  &\cdots &1
    \end{pmatrix}
    \begin{pmatrix}
        x_0\\
        x_Q \\
        \vdots \\
        x_1
    \end{pmatrix}
    )\in \R^{Q+1}.
\end{align*}
    Let $f=f_{2Q+2}\circ \cdots \circ f_1$.
    For $x\in [2^{Q+1}]$ and $x=x_0+2x_1+\cdots +2^{Q} x_Q$
    we have
    \begin{align*}
        f(x) = \begin{pmatrix}
        x_0\\
        x_Q \\
        \vdots \\
        x_1
    \end{pmatrix}.
    \end{align*}
\end{proof}

\lemapprox*

\begin{proof}
    From Lemma~\ref{lem:decompose}, there exists a network $f$
    such that for $x\in [2^{Q+1}]$, $f(x)=\eta(x)$.
    We denote the $l$-the layer of $f$ by $f_l$ for $l=1,\cdots,2Q+2$.
    Without loss of generality, we assume for $z\in \R^{K_{l-1}}$
    \begin{align*}
        f_l(z) = \sigma(w_l \cdot z + b_l),
    \end{align*}
    where $\sigma$ is ReLU activation and $w_l\in \R^{K_{l}\times K_{l-1}}, b_l\in \R^{K_l}$.

    Now we construct a $(2Q+2)$-layer network $F$
    in the form of Equation~\eqref{equ:network}.
    For $l=1,\cdots,2Q+2$, let $n_l=K_l\times P$.
    We construct $F_l$ in the form of Equation~\eqref{equ:layer} as
    \begin{align*}
        W^{k_l\times p,k_{l-1}\times r}[l] &= \left\{
        \begin{aligned}
            &{\rm{diag}}(w_l [k_l, k_{l-1}]) &\text{ if } p=r \\
            &0                  &\text{otherwise}\\
        \end{aligned}
        \right. ,\\
        C^{k_l\times p,k_{l-1}\times r}[l] &= \left\{
        \begin{aligned}
            &\frac{b_l[k_l]}{K_{l-1}} &\text{ if } p=r \\
            &0                  &\text{otherwise}\\
        \end{aligned}
        \right. ,\\
    \end{align*}
    for $k_l=1,\cdots,K_l$, $k_{l-1}=1,\cdots,K_{l-1}$ and $p,r=1,\cdots,P$.

    We can verify that for $X\in \R^{K_{l-1}\times P\times N}$,
    $k_l=1,\cdots,K_l$, $p=1,\cdots,P$ and $i=0,\cdots, N-1$
    \begin{align*}
        F_l(X)[k_l, p, i]
        &= \sigma(W^{k_l\times p}[l]\cdot X + C^{k_l\times p}[l]\cdot \one)[i]\\
        &= \sigma(\sum_{k_{l-1}=1}^{K_{l-1}} W^{k_l\times p, k_{l-1}\times p}\cdot X[k_{l-1},p,:]+C^{k_l\times p, k_{l-1}\times p})[i]\\
        &= \sigma(\sum_{k_{l-1}=1}^{K_{l-1}} {\rm{diag}}(w_l[k_l, k_{l-1}])\cdot X[k_{l-1},p,:]+ \frac{b_l[k_l]}{K_{l-1}})[i]\\
        &= \sigma(b_l[k_l] + \sum_{k_{l-1}=1}^{K_{l-1}} w_l[k_l, k_{l-1}] \cdot X[k_{l-1},p,i])\\
        &= \sigma(w_l[k_l,:]\cdot X[:, p, i] + b_l[k_l])\\
        &= f_l(X[:,p,i])[k_l].
    \end{align*}
    That is,
    \begin{align*}
        F_l(X)[:,p,i] = f_l(X[:,p,i]).
    \end{align*}
    Thus, for $X\in \R^{P\times N}$ and $p=1\cdots P$, $i=0,\cdots,N-1$
    \begin{align*}
        F(X)[:,p,i] =& f_{2Q+2}(F_{2Q+1}\circ\cdots\circ F_1(X)[:,p,i])\\
         &\vdots \\
        =&f_{2Q+2}\circ\cdots\circ f_1(X[p,i])\\
        =&f(X[p,i]).
    \end{align*}

    For $Z\in [2^{Q+1}]^{P\times N}$,
    \begin{align*}
        F(Z) = \eta(Z).
    \end{align*}
\end{proof}

\section{Proof of Lemma~\ref{lem:binary-exist}}\label{app:binary-exist}
Assume a network $F=F_2\circ F_1$ in the form of Equation~\eqref{equ:network}
with ReLU activations and $n_0=G$, $n_1=S$, $n_2=1$
satisfies that for $X\in G\times N$
\begin{align}
    F(X)=\frac{1}{S}\sum_{s=1}^{S} \sigma(W^s[1] \cdot X + C^s[1] \cdot \one). \label{equ:binary-network}
\end{align}
Here, the weights and biases in $F_2$ degenerate as
\begin{align*}
    W[2] &= W^1[2] = (\frac{1}{S}I, \cdots, \frac{1}{S}I),
    C[2] = 0.
\end{align*}
For convenience,
in the rest of this section
we simplify $W^s[1], C^s[1]$ to $W^s, C^s$.

The following result is well known.
\begin{lemma}\label{proposition}
 Let $\mathcal{B} = \{Z_s|s=1,2,\cdots, S \} \subset \B^{G\times N}$
be a $G$-channel aperiodic binary dataset.
 Let $\Vert Z \Vert$ be the $L_2$-norm of $Z$.
 \begin{itemize}
 \item[1)] $T^{\bm{0}}(Z_s)\cdot Z_s = \Vert Z_s\Vert ^2$.
 \item[2)] $T^M(Z_s)\cdot Z_t \leq\Vert Z_s\Vert ^2\leq  (GN)^2$
            for any $M\in \Z^d$.
         \item[3)] For any $M\in \Z^d$ that $M \modd (n_1, n_2, \cdots, n_d) \neq\zero$, $T^{M}(Z_s)\cdot Z_s \leq \Vert Z_s\Vert ^2 -1$.

         \item[4)] If $\Vert Z_s\Vert  = \Vert Z_t\Vert $, $T^M(Z_s)\cdot Z_t \leq \Vert Z_{t} \Vert^2 -1$ for any $M\in \Z^d$.
 \item[5)] If $\Vert Z_s\Vert  > \Vert Z_t\Vert $,
                    $\Vert Z_s\Vert  \geq \sqrt{\Vert Z_t\Vert ^2 + 1} \geq \Vert Z_t\Vert + \frac{1}{2 GN}$.
 \end{itemize}
\end{lemma}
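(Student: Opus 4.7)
The plan is to derive each of the five clauses by elementary counting on binary tensors combined with the aperiodicity hypothesis. The one underlying fact I will use repeatedly is that for any $Y, Z \in \B^{G\times N}$, the inner product $Y\cdot Z = \sum_I Y[I]Z[I]$ is the integer count of coordinates where both $Y$ and $Z$ equal $1$; in particular $\Vert Y\Vert^2 = Y\cdot Y$ is the number of $1$-entries in $Y$, and since translation only permutes entries, $\Vert T^M(Z_s)\Vert^2 = \Vert Z_s\Vert^2$ for every $M\in\Z^d$.

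Clause~(1) is immediate from $T^{\zero}=\mathrm{id}$. For~(2), I bound each summand $T^M(Z_s)[I]\,Z_t[I] \le T^M(Z_s)[I]$ and sum to obtain $T^M(Z_s)\cdot Z_t \le \Vert T^M(Z_s)\Vert^2 = \Vert Z_s\Vert^2$; the outer bound by $(GN)^2$ is coarse and needs only $\Vert Z_s\Vert^2 \le GN$. Clauses~(3) and~(4) are where aperiodicity enters. For~(3), the assumption $M \modd (n_1,\ldots,n_d)\neq\zero$ makes $T^M$ a non-identity action on $\H$, so aperiodicity of $\mathcal{B}$ forces $T^M(Z_s)\neq Z_s$. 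Writing $\Vert Z_s\Vert^2 - T^M(Z_s)\cdot Z_s$ as the count of indices where $Z_s[I]=1$ but $T^M(Z_s)[I]=0$, and noting that $T^M(Z_s)$ and $Z_s$ have the same number of $1$-entries while being unequal, this nonnegative integer is at least~$1$. For~(4), the same counting argument shows $T^M(Z_s)\cdot Z_t \le \Vert Z_t\Vert^2 - 1$ whenever $T^M(Z_s)\neq Z_t$; the inequality $T^M(Z_s)\neq Z_t$ follows from aperiodicity when $s\neq t$, while $s=t$ reduces to~(3).

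Clause~(5) is purely arithmetic. Since $\Vert Z_s\Vert^2$ and $\Vert Z_t\Vert^2$ are nonnegative integers, $\Vert Z_s\Vert^2 > \Vert Z_t\Vert^2$ forces $\Vert Z_s\Vert^2 \ge \Vert Z_t\Vert^2+1$, giving the first inequality. For the second I will use
\[
\sqrt{a+1}-\sqrt{a} \;=\; \frac{1}{\sqrt{a+1}+\sqrt{a}} \;\ge\; \frac{1}{2\sqrt{GN}} \;\ge\; \frac{1}{2GN},
\]
which applies with $a=\Vert Z_t\Vert^2$ because $\Vert Z_s\Vert^2 \le GN$ together with $\Vert Z_s\Vert^2 > \Vert Z_t\Vert^2$ gives $\Vert Z_t\Vert^2 \le GN-1$. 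No clause presents a substantive obstacle---the lemma really is a bookkeeping exercise, as the author remarks---so the only care needed is the degenerate case $s=t$ in clause~(4), which is absorbed into clause~(3).
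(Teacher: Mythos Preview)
Your argument is correct in every clause; the counting identities for binary tensors, the use of aperiodicity to force $T^M(Z_s)\neq Z_t$, and the integer-gap estimate in clause~(5) are exactly what is needed. The paper itself declares this lemma ``well known'' and gives no proof, so there is nothing to compare against---your write-up simply fills in what the authors omitted.
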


The $i$-th component of $F(Z)$ in Equation~\eqref{equ:binary-network} is
\begin{align*}
    F(Z)[i] = \frac{1}{S}\sum_{s=1}^{S}\sigma(W_i^k\cdot Z + C^k\cdot \one),
\end{align*}
where
$W_i^k= (W_i^{k,1},\cdots,W_i^{k,G})\in \R^{G\times N}$
and $W_i^{k,r}$ is the $i$-th row of $W^{k, r}$.
Recall that each circular filter $W^{k, r}\in \R^{N\times N}$
in Equation~\eqref{equ:binary-network}
is determined by its first row $W^{k, r}_0\in \R^N$
and $W^{k, r}_{\delta(M)}=T^M(W^{k, r}_0)$.
And the biases $C^{k, r}$ are actually scalars.

\begin{lemma}\label{lem:main}
Let $\mathcal{B} = \{Z_s|s=1,2,\cdots, S \} \subset \B^{G\times N}$
be a $G$-channel aperiodic binary dataset.
Endow $\mathcal{B}$ with an order that
$s\geq t \iff \Vert Z_s\Vert \geq \Vert Z_t\Vert.
$
Construct the filters and biases in equation~(\ref{equ:binary-network}) as
\begin{align*}
W^{s, r}_0 &= \frac{Z_s^r}{\Vert Z_s\Vert},\\
C^{s, r} &= \frac{1}{G(2GN+1)} -\frac{\Vert Z_{s-1}\Vert}{G},
\end{align*}
for $s=1,\cdots S, r=1\cdots,G$ and set $\Vert Z_0\Vert = \frac{1}{2GN+1}$.

Then,
\begin{itemize}
\item [a)] if $t<s$, then $\sigma(W^s_i\cdot Z_t+C^s\cdot \one)=0, i=0,1,\cdots,GN-1$;
\item [b)] if $t=s$, then $\sigma(W^s_0\cdot Z_s + C^s\cdot \one) - \sigma(W^s_i\cdot Z_s + C^s)>\frac{1}{2 GN+1},i=1,2,\cdots ,GN-1$;
\item [c)] if $t>s$, then $\sigma(W^s_i\cdot Z_t+C^s\cdot \one) <GN$.

\end{itemize}

\end{lemma}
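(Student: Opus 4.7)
The plan is to directly compute the preactivation $W^s_i\cdot Z_t + C^s\cdot \one$ in closed form and then verify each of the three claims by bookkeeping with the five inequalities in Lemma~\ref{proposition}. First I would unfold the circular-filter structure: since $W^{s,r}_0=Z_s^r/\|Z_s\|$ and $W^{s,r}$ is circular, the row indexed by $i=\delta(M)$ is $T^M(Z_s^r)/\|Z_s\|$, so summing over the $r$ channels collapses to the single inner product
\[
W^s_i\cdot Z_t \;=\; \frac{T^M(Z_s)\cdot Z_t}{\|Z_s\|}.
\]
The bias contributes the scalar $C^s\cdot\one=\sum_{r=1}^G C^{s,r}=\frac{1}{2GN+1}-\|Z_{s-1}\|$. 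Hence all three parts of the lemma reduce to estimating
\[
A_{i,s,t} \;:=\; \frac{T^M(Z_s)\cdot Z_t}{\|Z_s\|}+\frac{1}{2GN+1}-\|Z_{s-1}\|,\qquad i=\delta(M).
\]

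For part (b), at $t=s$ and $i=0$, part~1 of Lemma~\ref{proposition} gives $A_{0,s,s}=\|Z_s\|-\|Z_{s-1}\|+\frac{1}{2GN+1}\geq\frac{1}{2GN+1}$, so the ReLU is active. For $i\neq 0$, part~3 improves the inner product to at most $\|Z_s\|^2-1$, so $A_{i,s,s}\leq A_{0,s,s}-1/\|Z_s\|$, and the gap $1/\|Z_s\|\geq 1/(GN)>1/(2GN+1)$ survives the ReLU (a possibly negative $A_{i,s,s}$ only strengthens the gap once clipped to $0$). For part (c), part~2 gives $T^M(Z_s)\cdot Z_t/\|Z_s\|\leq\|Z_s\|\leq\sqrt{GN}$, whence $A_{i,s,t}<GN$ after absorbing the small constants.

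The bulk of the argument, and the delicate part, is (a): $t<s$, so $\|Z_t\|\leq\|Z_{s-1}\|$, and we must show $A_{i,s,t}\leq 0$, i.e.\ $T^M(Z_s)\cdot Z_t/\|Z_s\|\leq \|Z_{s-1}\|-1/(2GN+1)$. Using the binary bound $T^M(Z_s)\cdot Z_t\leq \|Z_t\|^2$ (the inner product counts coincident ones), the case $\|Z_t\|<\|Z_{s-1}\|$ is immediate from part~5, which supplies a gap $1/(2GN)$. The tight case $\|Z_t\|=\|Z_{s-1}\|$ splits by comparison with $\|Z_s\|$: if $\|Z_s\|=\|Z_t\|$, part~4 sharpens the inner product to $\|Z_t\|^2-1$ and the margin $1/\|Z_s\|\geq 1/(2GN+1)$ closes the estimate; if $\|Z_s\|>\|Z_t\|$, invoking part~5 in the form $\|Z_s\|\geq\|Z_t\|+1/(2GN)$ and monotonicity of $x\mapsto \|Z_t\|^2/x$ yields
\[
\|Z_t\|-\frac{\|Z_t\|^2}{\|Z_s\|} \;\geq\; \frac{\|Z_t\|}{2GN\|Z_t\|+1} \;\geq\; \frac{1}{2GN+1},
\]
the final inequality being equivalent to $\|Z_t\|(2GN+1)\geq 2GN\|Z_t\|+1$, which holds because $\|Z_t\|\geq 1$ (non-zero binary vector).

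The main obstacle is precisely this last sub-case of (a): Cauchy--Schwarz is too loose, and one has to combine the binary self-overlap bound $\|Z_t\|^2$ with the discrete norm gap $1/(2GN)$ from part~5 and the integrality $\|Z_t\|\geq 1$. This is also the calibration that forces the specific threshold $1/(2GN+1)$ appearing in the bias $C^{s,r}$, and in particular motivates the artificial convention $\|Z_0\|=1/(2GN+1)$, which ensures the $s=1$ base case of (c) while still letting (b) fire.
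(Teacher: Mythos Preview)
Your proposal is correct and follows essentially the same route as the paper's proof: both reduce everything to the closed-form preactivation $T^M(Z_s)\cdot Z_t/\|Z_s\|+\frac{1}{2GN+1}-\|Z_{s-1}\|$ and then dispatch the three claims via the inequalities of Lemma~\ref{proposition}. The only cosmetic difference is that in part~(a) the paper first uses $\|Z_{s-1}\|\geq\|Z_t\|$ uniformly and then splits only on $\|Z_s\|=\|Z_t\|$ versus $\|Z_s\|>\|Z_t\|$, collapsing your three sub-cases into two with the identical arithmetic.
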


\begin{proof}
This proof uses Lemma~\ref{proposition}.
\begin{itemize}
\item[a)] Assuming $t<s$,   we have
\begin{align*}
W^s_{\delta(M)}\cdot Z_t+C^s \cdot \one&= T^M(Z_s)\cdot Z_t / \Vert Z_s\Vert+ \frac{1}{2GN+1} - \Vert Z_{s-1}\Vert,\\
&\leq T^M(Z_s)\cdot Z_t / \Vert Z_s\Vert+ \frac{1}{2GN+1} - \Vert Z_{t}\Vert.
\end{align*}
If $\Vert Z_s\Vert  = \Vert Z_t\Vert $,
\begin{align*}
T^M(Z_s)\cdot Z_t / \Vert Z_s\Vert &\leq \Vert Z_t\Vert - \frac{1}{\Vert Z_t\Vert},\\
W^s_{\delta(M)}\cdot Z_t+C^s \cdot \one& \leq \frac{1}{2GN+1} - \frac{1}{\Vert Z_t\Vert} < 0.
\end{align*}
If $\Vert Z_s\Vert > \Vert Z_t\Vert $,
\begin{align*}
T^M(Z_s)\cdot Z_t / \Vert Z_s\Vert &\leq \frac{\Vert Z_t\Vert^2}{\Vert Z_t\Vert + \frac{1}{2 GN}},\\
W^s_{\delta(M)}\cdot Z_t+C^s\cdot \one &\leq \frac{\Vert Z_t\Vert^2}{\Vert Z_t\Vert + \frac{1}{2 GN}}+\frac{1}{2GN+1} - \Vert Z_{t}\Vert\\
&= \frac{1}{2GN+1}-\frac{1}{2GN+1/\Vert Z_t\Vert}\\
&< 0.
\end{align*}
Thus, for all $M\in \Z^d$,
\begin{align*}
\sigma(W^s_{\delta(M)}\cdot Z_t+C^s\cdot \one)=0.
\end{align*}

\item [b)] We have
\begin{align*}
W^s_0\cdot Z_s + C^s\cdot \one &= \Vert Z_{s}\Vert - \Vert Z_{s-1}\Vert+\frac{1}{2GN+1} ,\\
W^s_{\delta(M)}\cdot Z_s + C^s\cdot \one &= T^M(Z_s)\cdot Z_s / \Vert Z_s\Vert+ \frac{1}{2GN+1} - \Vert Z_{s-1}\Vert\\
&\leq \frac{\Vert Z_s\Vert^2-1}{\Vert Z_s\Vert}+ \frac{1}{2GN+1} - \Vert Z_{s-1}\Vert\\
&= \Vert Z_{s}\Vert- \Vert Z_{s-1}\Vert +\frac{1}{2GN+1}  - \frac{1}{\Vert Z_s\Vert}.
\end{align*}
Since
\begin{align*}
\Vert Z_{s}\Vert- \Vert Z_{s-1}\Vert \geq 0 \text{  and  }
\frac{1}{2GN+1}  - \frac{1}{\Vert Z_s\Vert} < 0,\\
\end{align*}
we have
\begin{align*}
\sigma(\Vert Z_{s}\Vert - \Vert Z_{s-1}\Vert+\frac{1}{2GN+1}) -
\sigma(\Vert Z_{s}\Vert- \Vert Z_{s-1}\Vert +\frac{1}{2GN+1}  - \frac{1}{\Vert Z_s\Vert})
\geq \frac{1}{2GN+1}.
\end{align*}
\item [c)] Assuming $t>s$,   we have
\begin{align*}
W^s_{\delta(M)}\cdot Z_t+C^s\cdot \one &= T^M(Z_s)\cdot Z_t / \Vert Z_s\Vert+ \frac{1}{2GN+1} - \Vert Z_{s-1}\Vert\\
&\leq\Vert Z_s\Vert- \frac{1}{\Vert Z_s\Vert}+ \frac{1}{2GN+1} - \Vert Z_{s-1}\Vert \\
&< \Vert Z_s\Vert\\
&\leq GN,\\
\sigma( W^s_{\delta(M)}\cdot Z_t+C^s\cdot \one) &< GN.
\end{align*}

\end{itemize}

\end{proof}

\lemexistence*

\begin{proof}
Without loss of generality,
we assign an order to the dataset that
\begin{align*}
s\geq t \iff \Vert Z_s\Vert \geq \Vert Z_t\Vert.
\end{align*}

We set $\alpha \geq 1+GN+2G^2N^2$
and construct $F$ as Equation~\eqref{equ:binary-network} such that
\begin{align*}
W^{s, r}_0 &= \frac{\alpha^{s-1} Z_s^r}{\Vert Z_s\Vert},\\
C^{s, r} &= \frac{\alpha^{s-1}}{G(2GN+1)} -\frac{\alpha^{s-1} \Vert Z_{s-1}\Vert}{G},
\end{align*}
for $s=1,\cdots S, r=1\cdots,G$ and set $\Vert Z_0\Vert = \frac{1}{2GN+1}$.

From Lemma~\ref{lem:main},
we have for $i=1,2,\cdots, GN-1$,
\begin{align*}
&S(F(Z_t)[0] -  F(Z_t)[i])\\
=& \sum_{s=1}^S \alpha^{s-1} [\sigma(W^s_0\cdot Z_t + C^s) - \sigma(W^s_i\cdot Z_t + C^s)]\\
=& \sum_{s=1}^{t} \alpha^{s-1} [\sigma(W^s_0\cdot Z_t + C^s) - \sigma(W^s_i\cdot Z_t + C^s)]\\
\geq & \frac{\alpha^{t-1}}{2GN+1} + \sum_{s=1}^{t-1} \alpha^{s-1} [\sigma(W^s_0\cdot Z_t + C^s) - \sigma(W^s_i\cdot Z_t + C^s)]\\
> & \frac{\alpha^{t-1} }{2GN+1} - GN\sum_{s=1}^{t-1} \alpha^{s-1} \\
=& \frac{\alpha^{t-1}}{2GN+1} - \frac{GN(1-\alpha^{t-1})}{1-\alpha}\\
=& \frac{(\alpha -2G^2N^2-GN-1)\alpha^{t-1}+2G^2N^2+GN}{(2GN+1)(\alpha-1)}\\
>& 0.
\end{align*}
\end{proof}

\section{Experimental settings}\label{app:exp}
    For CIFAR-10, we constantly pad 4 pixels with values 0 around images.
    For MNIST, we resize images to $32\times 32$.
    For 3D-MNIST, we voxelize this dataset and constantly pad 8 pixels with value 0 around images.

    We leverage restorers with 6 layers.
    In each layer, we use a sparse circular filter, for example, its kernel size is 9.
    Each layer outputs only one channel and has no bias parameter.

    For rotation restoration for MNIST, we set $n_1=n_2=36$, $R=112$, and $a=0.92$.

\end{document}